\newtheorem{theorem}{Theorem}
\newtheorem{lemma}{Lemma}
\newtheorem{remark}{Remark}
\newtheorem{assumption}{Assumption}
\providecommand{\cref}[1]{Chapter~\ref{#1}}
\providecommand{\R}{\ensuremath{\mathbb{R}}}
\providecommand{\I}{\ensuremath{\mathbb{I}}}
\providecommand{\E}{\ensuremath{\mathbb{E}}}
\renewcommand{\vec}[1]{\ensuremath{\boldsymbol{#1}}}
\providecommand{\calN}{\mathcal{N}}
\providecommand{\mI}{\mathbf{I}}
\providecommand{\vp}{\mathbf{p}}
\providecommand{\vx}{\mathbf{x}}
\providecommand{\vz}{\mathbf{z}}
\providecommand{\vepsilon}{\vec{\epsilon}}
\DeclareRobustCommand\onedot{\futurelet\@let@token\@onedot}
\def\@onedot{\ifx\@let@token.\else.\null\fi\xspace}
\def\eg{\emph{e.g}\onedot} 
\def\ie{\emph{i.e}\onedot}
\newcommand{\proposed}{PAIA\xspace}
\begin{document}

\title[Concept Auditing for Shared Diffusion Models at Scale]{What Lurks Within? Concept Auditing for Shared Diffusion Models at Scale}

\author{Xiaoyong (Brian) Yuan}
\affiliation{
\institution{Clemson University}
\city{Clemson}
\state{SC}
\country{USA}}
\email{xiaoyon@clemson.edu}

\author{Xiaolong Ma}
\affiliation{
\institution{The University of Arizona}
\city{Tucson}
\state{AZ}
\country{USA}
}
\email{xiaolongma@arizona.edu}

\author{Linke Guo}
\affiliation{%
\institution{Clemson University}
\city{Clemson}
\state{SC}
\country{USA}
}
\email{linkeg@clemson.edu}

\author{Lan (Emily) Zhang}
\affiliation{%
\institution{Clemson University}
\city{Clemson}
\state{SC}
\country{USA}
}
\email{lan7@clemson.edu}

\begin{abstract}
Diffusion models (DMs) have revolutionized text-to-image generation, enabling the creation of highly realistic and customized images from text prompts. With the rise of parameter-efficient fine-tuning (PEFT) techniques like LoRA, users can now customize powerful pre-trained models using minimal computational resources. However, the widespread sharing of fine-tuned DMs on open platforms raises growing ethical and legal concerns, as these models may inadvertently or deliberately generate sensitive or unauthorized content, such as copyrighted material, private individuals, or harmful content. 
Despite increasing regulatory attention on generative AI, there are currently no practical tools for systematically auditing these models before deployment.

In this paper, we address the problem of concept auditing: determining whether a fine-tuned DM has learned to generate a specific target concept. Existing approaches typically rely on prompt-based input crafting and output-based image classification but they suffer from critical limitations, including prompt uncertainty, concept drift, and poor scalability. To overcome these challenges, we introduce Prompt-Agnostic Image-Free Auditing (\proposed), a novel, model-centric concept auditing framework. By treating the DM as the object of inspection, \proposed enables direct analysis of internal model behavior, bypassing the need for optimized prompts or generated images. It integrates two key components: a prompt-agnostic strategy that mitigates prompt sensitivity by analyzing model behavior during late-stage denoising, and an image-free detection method based on conditional calibrated error, which compares the internal dynamics of a fine-tuned model against its base version. 
Our auditing setting assumes internal access to DMs, but does not require access to proprietary fine-tuning data or user prompts, an assumption aligned with how hosted platforms audit uploaded models.
We evaluate \proposed on $320$ controlled models trained with curated concept datasets and $771$ real-world community models sourced from a public DM sharing platform, covering a wide range of concepts including celebrities, cartoon characters, videogame entities, and movie references. Evaluation results show that \proposed achieves over $90$\% detection accuracy while reducing auditing time by $18$ - $40\times$ compared to existing baselines, and remains robust under adaptive attacks. To our knowledge, \proposed is the first scalable and practical solution for pre-deployment concept auditing of diffusion models, providing a practical foundation for safer and more transparent diffusion model sharing\footnote{This is an extended version of the paper accepted at CCS 2025.}.

\end{abstract}

\begin{CCSXML}
<ccs2012>
      <concept>
       <concept_id>10010147.10010178.10010224</concept_id>
       <concept_desc>Computing methodologies~Computer vision</concept_desc>
       <concept_significance>500</concept_significance>
       </concept>
       <concept>
       <concept_id>10002978.10002991.10002996</concept_id>
       <concept_desc>Security and privacy~Digital rights management</concept_desc>
       <concept_significance>500</concept_significance>
       </concept>

 </ccs2012>
\end{CCSXML}
\ccsdesc[500]{Computing methodologies~Computer vision}
\ccsdesc[500]{Security and privacy~Digital rights management}

\keywords{Diffusion Models; Concept Auditing; Generative AI}

\maketitle

\section{Introduction}
Diffusion models (DMs) have revolutionized text-to-image (T2I) generation, enabling the synthesis of highly realistic and semantically rich images from natural language prompts~\cite{croitoru2023diffusion, ruiz2023dreambooth, zhang2023adding}. Through iteratively refining noise into coherent visual content, DMs have surpassed traditional generative methods such as GANs~\cite{goodfellow2014generative} and VAEs~\cite{kingma2013auto} in both visual fidelity and flexibility. This leap in generative quality has been further accelerated by parameter-efficient fine-tuning (PEFT) methods like Low-Rank Adaptation (LoRA)~\cite{hu2021lora}, which allow users to customize large pre-trained models, such as Stable Diffusion \cite{rombach2021highresolution}, for specific concepts or styles using limited compute and memory resources. 

This shift has catalyzed a thriving ecosystem of community-driven model customization. Users can now fine-tune and distribute their models with minimal technical expertise, aided by user-friendly toolkits~\cite{automatic1111,comfy_website} and supported by sharing platforms such as Civitai~\cite{civitai_website}, HuggingFace~\cite{huggingface_website}, and SeaArt~\cite{seaart_website}. These platforms host tens of thousands of customized DMs, covering a broad spectrum of visual concepts, artistic styles, and application domains. While this democratization has opened new frontiers in creativity and accessibility, it has also introduced significant risks. Fine-tuned DMs may be misused to generate inappropriate or legally problematic content. Studies have documented instances where models replicate copyrighted characters~\cite{zhang2023copyright,shan2023glaze, zhu2024watermark}, impersonate real individuals via deepfakes~\cite{hao2024doesn}, or produce inappropriate content~\cite{zhang2024generate,Yang2023SneakyPromptJT,Chin2023Prompting4DebuggingRT}.

Despite growing concerns, technical oversight remains minimal. Current auditing practices on public model hubs rely heavily on user-supplied tags with little automation or verification. For example, Civitai~\cite{civitai_website}, one of the most widely used platforms, relies primarily on user-provided metadata to flag models, such as depictions of ``real people,'' or ``mature content,'' which are loosely defined, inconsistently applied, and easily circumvented. The absence of standardized review procedures or systematic validation has already led to legal consequences. 
The community reports and discussions suggest that Civitai has received takedown requests related to the unauthorized use of copyrighted or personal content due to the models trained on proprietary content\footnote{Civitai Facilitates the Use of Stolen Intellectual Property, \url{https://luddite.pro/civitai-facilitates-use-stolen-intellectual-property}}\footnote{Square Enix may have filed a DMCA takedown notice with Civitai, \url{https://www.reddit.com/r/StableDiffusion/comments/11zubsj/it_appears_that_someone_acting_on_behalf_of}}.

\begin{figure*}[!t]
\centering
\includegraphics[width=0.85\linewidth]{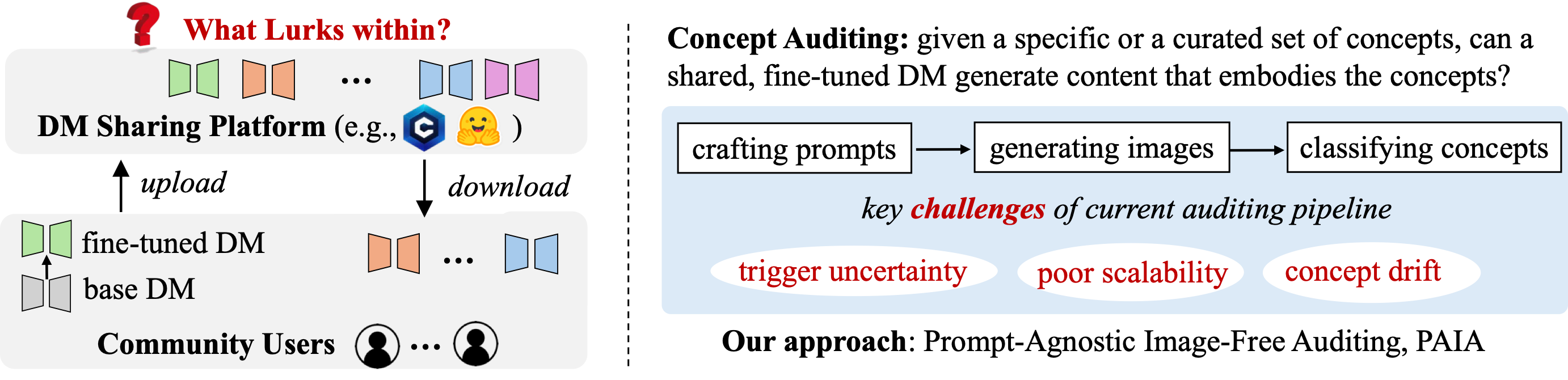}
\vspace{-1em}
\caption{Overview of \proposed: A Model-Centric Concept Auditing for Fine-Tuned Diffusion Models. Community users increasingly fine-tune and share DMs on platforms such as Civitai~\cite{civitai_website}, HuggingFace~\cite{huggingface_website}, and SeaArt~\cite{seaart_website}, introducing risks of unauthorized or sensitive content generation. Existing approaches rely on observable behaviors—prompts
and outputs—that are inherently unstable, easily manipulated, and costly to evaluate. We conduct a systematic study and propose a model-centric framework \proposed that bypasses these limitations by auditing internal model behavior directly. 
}
\label{fig:overview}
\vspace{-1em}
\end{figure*}

To mitigate such risks, some efforts have introduced proactive defenses, such as built-in safety filters designed to block harmful content by detecting unsafe prompts or screening generated images~\cite{Rando2022RedTeamingTS,Yang2023SneakyPromptER}. However, their effectiveness is limited. These filters are often implemented as optional add-ons and can be easily disabled after model download, leaving the core DM unrestricted. More importantly, these defenses suffer from a shared limitation: they rely on observable behaviors, prompts and outputs, that are unstable, easy to manipulate, and difficult to verify at scale. These limitations call for a deeper shift in auditing methodology: rather than focusing on what a model produces, we ask a more direct and scalable question: \textit{what lurks within}?

In this work, we study the problem of \textbf{concept auditing}: given a specific concept or a curated set of concepts, such as copyrighted characters, company logos, or celebrity identities, can a shared, fine-tuned diffusion model generate content that embodies those concepts? Rather than attempting to flag all forms of inappropriate content, we focus on concept-specific auditing at scale. This reflects real-world enforcement needs, where moderation and legal action often target clearly defined entities tied to intellectual property.
To keep the scope practical and aligned with real-world use cases, we focus on concepts that are visually distinctive and often subject to content moderation or IP protection. These include individual celebrity identities (\eg, ``Taylor Swift''), characters from well-known cartoons (\eg, ``Muppets''), and entities from games and movies. Our framework is designed to accommodate this diversity through an example-based approach: a concept is considered present if the model can generate recognizable outputs aligned with a small set of reference examples of the concept. This flexible definition allows us to support a broad range of concept granularities without relying on a rigid taxonomy. Details of how we define and structure concept scopes for auditing can be found in Section~\ref{sec:discussion_concept}.

A common approach to this problem is to audit models through their outputs: by crafting prompts that might trigger the concept of interest, generating images from the model, and using a classifier to determine whether the concept appears, as shown in Figure~\ref{fig:overview}. While this output-driven pipeline has become the dominant strategy, it relies on \textit{two fragile assumptions}: first, that effective prompts can be reliably discovered to generate the target concept; and second, that external detectors can accurately identify whether the generated output matches that concept.
In practice, both assumptions often fail for community-tuned models.
A key issue is \textbf{trigger uncertainty}: identifying prompts that consistently activate a target concept is inherently difficult due to the vast, discrete, and ambiguous nature of the prompt space. Optimization-based methods such as adversarial or reinforcement learning often converge on unnatural or semantically misaligned phrases, undermining their reliability.
Even when triggers are found, the assumption of accurate detection is weakened by \textbf{concept drift}~\cite{widmer1996learning,gama2014survey}: external classifiers or CLIP-based detectors are typically trained on natural images and often perform poorly on synthetic outputs, especially when prompt probing introduces distributional shifts that degrade both detection accuracy and optimization guidance~\cite{zhang2024generate,wen2024hard}. These problems are further compounded by \textbf{limited scalability}: discovering effective prompts and verifying them via image generation and classification is computationally intensive, often requiring hundreds of iterations. %

These limitations highlight a fundamental bottleneck in current auditing pipelines: their reliance on observable behaviors (\ie, prompts and their outputs) is inherently unstable, costly to evaluate, and easily manipulated. To overcome these issues, we propose a fundamentally different approach grounded in a shift of perspective: rather than auditing DMs by adjusting their inputs or analyzing their outputs, \textit{we treat the model itself as the source of truth}. This motivates a new direction in concept auditing: determining what a model has actually learned by examining its internal behavior, rather than what it happens to generate.

Our approach is built on two complementary strategies that collectively form the foundation of a model-centric auditing framework. Through theoretical analysis and empirical observation, we first uncover that \textit{the influence of prompts varies significantly throughout the denoising process}. Specifically, we find that prompts exert the strongest influence during the early stages of denoising, while their impact diminishes substantially in later steps. This insight motivates our first strategy: \textbf{a prompt-agnostic design} that aims to mitigate the effects of \textit{prompt uncertainty}. Rather than depending on fragile prompt optimization, we focus on the model's internal behavior during the later stages of generation, where its learned representations are more stable and less sensitive to the input prompt. This enables concept auditing that is inherently more robust to inaccurate, incomplete, or missing prompts.

While prompt-agnosticism reduces reliance on brittle input probing, current output-based detection methods remain limited by their dependence on image generation and external classifiers, leading to \textit{concept drift}, and are prohibitively \textit{expensive at scale}. To address this, we introduce our second strategy: \textbf{an image-free design} that further reinforces our model-centric paradigm. Instead of evaluating model outputs, we \textit{directly assess the model's internal denoising dynamics}. Specifically, we propose a metric called conditional calibrated error, which quantifies the behavioral discrepancy between a fine-tuned model and its corresponding base model when processing concept-relevant inputs. Notably, the base models are typically accessible in real-world deployments, because LoRA fine-tuning is designed to be modular, which requires the base model to be loaded alongside the LoRA weights at inference time. This makes the base model a natural and reliable reference point for behavioral comparison. By comparing internal activations directly, we can isolate fine-tuning effects and detect concept learning, without image sampling or reliance on noisy supervision.

Together, the proposed strategies offer two key advantages. First, they enhance \textbf{robustness} by grounding auditing in the model's own training dynamics, avoiding false positives caused by prompt misalignment or miscalibrated detectors. Second, they improve \textbf{efficiency} and \textbf{scalability} by operating entirely within the model's latent space—eliminating the need for prompt optimization, image generation, or downstream classification. Crucially, the prompt-agnostic and image-free designs are mutually reinforcing: by shifting the focus away from unstable input-output behavior and toward internal model representations, they enable a principled and scalable approach to concept auditing.
We operationalize this model-centric perspective in a unified framework: \textbf{Prompt-Agnostic Image-Free Auditing (\proposed)}, designed to efficiently and effectively determine whether a fine-tuned DM can generate a given target concept. 
In this paper, we assume the auditor has internal access to the uploaded DMs (parameters), but without access to proprietary fine-tuning data or user prompts, which aligns with standard auditing workflows on hosted model hubs.
To the best of our knowledge, \proposed represents the first practical, scalable, and systematically validated solution for concept auditing in fine-tuned DMs.
Our major contributions are summarized below.

\begin{itemize}[leftmargin=1em]
\item \textbf{A New Perspective on Concept Auditing.} We introduce Prompt-Agnostic Image-Free Auditing (\proposed), the first model-centric auditing framework that shifts the focus from observable inputs and outputs to the model's internal behavior. By treating the fine-tuned DM itself as the source of evidence, \proposed enables principled, robust, and scalable auditing.
\item \textbf{Prompt-Agnostic Design.} Adopting a model-centric perspective, we move beyond fragile prompt probing and examine how the model internally responds to prompts during generation. Our theoretical and empirical analysis reveals that prompt influence diminishes significantly in later denoising stages. Guided by this insight, we design a prompt-agnostic mechanism that analyzes late-stage model behavior to detect learned concepts, removing dependence on costly and ineffective prompt optimization.
\item \textbf{Image-Free Design.} Extending the model-centric paradigm, we shift from analyzing observable outputs to examining the model's internal behavior. We introduce an image-free detection mechanism based on a novel metric, conditional calibrated error, which captures behavioral deviations between a fine-tuned DM and its base counterpart. This approach enables accurate and scalable concept auditing without the need for image generation or external supervision.

\item \textbf{Extensive Experimental Validation in the Wild.} We conduct comprehensive experiments to evaluate \proposed on both controlled and real-world settings. In our controlled evaluation, we fine-tune $320$ DMs, each on a specific \textit{target concept}, where a concept refers to a recognizable visual entity such as an individual celebrity or a cartoon character. This dataset includes $50$ celebrity identities and $10$ cartoon characters. For real-world evaluation, we collect $771$ community-shared models from the Civitai platform, spanning $174$ celebrities, $145$ cartoon characters, $192$ videogame-related entities, and $179$ movie-based concepts, along with $81$ rare concepts. Across both settings, \proposed consistently achieves high accuracy (over $90$\%), efficiency ($18-40\times$ speedup), and remains robust under adaptive attacks, significantly outperforming existing baselines. To the best of our knowledge, this constitutes the first large-scale, systematic evaluation of concept auditing for fine-tuned DMs.

\end{itemize}

\section{Background}
\subsection{Diffusion Models (DMs)}
DMs have emerged as one of the most effective models for image generation  \cite{croitoru2023diffusion,ruiz2023dreambooth,zhang2023adding}, powering commercial image-generation applications, such as Stable Diffusion~\cite{rombach2021highresolution}, DALL-E 3~\cite{betker2023improving}, and MidJourney~\cite{borji2022generated}. 
Conceptually, the diffusion process can be described as a stochastic, iterative procedure in which noise is gradually introduced into an image until it becomes indistinguishable from pure noise. Specifically, in the diffusion process, given an image $\bm{x}_0$, a time step $0\le t \le T$, and a white noise vector $\vepsilon_t \sim \calN(0,\mI)$, a noisy image at time step $t$, $\vx_t$, is generated 
$
\vx_t = \sqrt{\overline{\alpha}_t} \vx_0 +  \sqrt{1-\overline{\alpha}_t}\vepsilon_t,
$
where $\overline{\alpha}_t$ represents a noise scheduling factor controlling the amount of noise injected at time step $t$. The noise schedule is designed to smoothly transform the image from the original $\vx_0$ to an almost pure noise distribution, $\vx_T$, as $t \to T$.

DMs are trained by learning to reverse the above forward diffusion process, progressively denoising $\vx_t$ to recover the original image $\vx_0$. %
During training, a neural network parameterized by $W$ learns to reverse this process by predicting $\vepsilon_t$ from the noisy input $\vx_t$, minimizing the MSE loss:
\begin{equation}
\min_{W} \E\left[\left\|{\vepsilon}_{W}(\vx_t) - \vepsilon_t \right\|^2\right].
\end{equation}
To reduce computational complexity for high-resolution generation, Latent Diffusion Models (LDMs)~\cite{rombach2021highresolution} operate in a lower-dimensional latent space $\vz_t$ and apply the same training objective.

\textbf{Text-to-Image (T2I) generation:}
T2I generation leverages textual input, \ie, prompts, to guide the image generation process~\cite{croitoru2023diffusion,ruiz2023dreambooth,zhang2023adding}. This begins by encoding the textual prompt into a prompt embedding \(\vp\) via a text encoder. The prompt embedding \(\vp\) is then fed into a DM \({\vepsilon}_{W}(\vz_t, \vp)\), conditioning the image generation process on the textual description. This integration is facilitated through a cross-attention mechanism. Specifically, text features are encoded into a key vector $K$ and a value vector $V$ using linear projection matrices $W_K$ and $W_V$, respectively. Simultaneously, the latent image features are projected into a query vector $Q$ using another projection matrix $W_Q$. A cross-attention map $S$ is computed as
\begin{equation}
\label{eq:attention_map}
S = \mathrm{softmax}\left(\frac{QK^T}{\sqrt{d}}\right), 
\end{equation}
where $d$ denotes the dimension of the projected vectors. The cross-attention map $S_{ij}$ represents the attention weights of the $j$-th tokens on the $i$-th pixel of the latent image features. Using this attention map, the output of the cross-attention layer is given by
\begin{equation}
\label{eq:attention_output}
Y=SV.
\end{equation}
By applying multiple cross-attention layers throughout the DM, the text features guide the iterative refinement of the latent image representation \(\vz_t\), ensuring that the generated image aligns with the input prompt.

The training process of the text-guided DM follows the same objective function as the original DM, where the prompt embedding \(\vp\) is used to minimize the denoising error,
\begin{equation}
\label{eq:denoising_error_prompt}
\min_{W} \E \left[ \left\|{\vepsilon}_{W}(\vz_t, \vp) - \vepsilon_t \right\|^2\right].
\end{equation}

During inference, the T2I generation process involves both conditional denoising (where the prompt embedding is set to \(\vp\)) and unconditional denoising (where the prompt embedding is encoded on a null character, i.e., \(\vp = \varnothing\)). To balance the impact of \(\vp\) on a generated image, the predicted noise at time step $t$ is calculated as
$
{\vepsilon}_{W}(\vz_t, \vp) = {\vepsilon}_{W}(\vz_t, \varnothing) + \eta \left( {\vepsilon}_{W}(\vz_t, \vp) - {\vepsilon}_{W}(\vz_t, \varnothing) \right),
$
where $\eta>1$ is the guidance scale that controls the strength of text conditioning. A higher $\eta$ increases the influence of the text prompt, making the generated image more closely aligned with the textual description, at the expense of diversity.
After the iterative denoising process, the final latent image representation \(\vz_0\) is decoded into a high-resolution image \(\vx_0\) using a pre-trained decoder.

\subsection{Parameter-Efficient Fine-Tuning (PEFT)}
PEFT has emerged as a widely adopted strategy for adapting large-scale models while significantly reducing computational and memory overhead~\cite{fu2023effectiveness,ding2023parameter,liu2022few}. This paper focuses primarily on Low-Rank Adaptation (LoRA) \cite{hu2021lora}, one of the most popular PEFT for fine-tuning DMs. LoRA introduces low-rank updates to pre-trained weight matrices during fine-tuning. Instead of updating the full parameter matrix, LoRA learns and stores a pair of low-rank matrices, $B \in \R^{d \times r}$ and $A \in \R^{r \times k}$, such that the original weight $W$ is modified as:
\begin{equation}
W' = W + \Delta W = W + BA,
\end{equation}
where the rank $r \ll \min(d, k)$ is typically small. This approach significantly reduces the number of trainable parameters, enabling efficient fine-tuning with minimal resource demands. In practice, LoRA is applied to all attention modules within the DM, including both self-attention and cross-attention layers. 

A major advantage of LoRA in real-world settings, such as community-driven model sharing, is its compact parameter footprint. Rather than sharing the entire set of fine-tuned model weights, users only need to share the small set of learned LoRA parameters. For example, when using rank $r=32$, the total size of LoRA parameters for Stable Diffusion 1.5 is approximately $25$ MB, orders of magnitude smaller than the full model size of around $5$ GB. This makes LoRA particularly attractive for scalable and lightweight distribution of fine-tuned models.

\section{\proposed: A Model-Centric Auditing Framework} \label{sec:method}

To address the practical challenges of concept auditing in the wild, we propose a model-centric framework, Prompt-Agnostic Image-Free Auditing (\proposed). Instead of relying on optimized prompts or generated outputs, \proposed analyzes the internal behavior of DMs to determine whether they can generate specific target concepts.

\proposed integrates two key innovations: a prompt-agnostic design that reduces reliance on prompt optimization by focusing on stable model behavior in the later stages of generation, and an image-free design that avoids output generation and external classifiers, enabling scalable and robust auditing. This section details each of these components, starting with the prompt-agnostic design, followed by the image-free design, and concludes with the full auditing pipeline.

\subsection{Threat Model}
\label{sec:threat}
We assume the auditor can access parameters from both the fine-tuned and the base model, but not know how the model is fine-tuned, \ie, what concepts are involved in the fine-tuning and what training data and prompts are used. This assumption is practical and common for model-sharing platforms (\eg, Civitai, HuggingFace). These platforms store model parameters for distribution and, in the event of DMCA takedown requests (\eg, for copyright infringement), can access both the fine-tuned and base models for analysis. Note that we don't assume a typical black-box setting (\eg, input–output access only), as it is not representative in concept auditing scenarios.
In addition, we assume the auditor has access to the sample images of the target concept, which will be used to evaluate whether the DMs can generate such concept.
This assumption is practical, \eg, in a takedown request for a cartoon character, the requester would provide sample images of the target character.

\subsection{Motivation and Challenges in the Wild}
While the widespread adoption of PEFT has enabled individual users to rapidly customize and share DMs across public platforms, it also introduces serious risks: many models are fine-tuned on unverified, proprietary, or sensitive datasets, with minimal oversight. This raises an urgent need for concept auditing: verifying whether a shared DM has learned to generate specific high-risk concepts.

In practice, auditing in this ecosystem is both essential and extremely challenging. Platforms such as Civitai allow users to upload and download models freely, often with incomplete, inconsistent, or misleading metadata. Trigger words may be omitted or obfuscated, and sample outputs are typically sparse, low-quality, or unavailable altogether. This auditing task remains unsolved at scale due to three core challenges: 

\begin{itemize}[leftmargin=1em]
    \item Prompt Uncertainty. Trigger phrases for sensitive concepts are rarely documented, often idiosyncratic, and embedded in a vast, ambiguous prompt space.
    \item Concept Drift in Detection. Output-based classifiers are typically trained on natural images and generalize poorly to synthetic content, especially under distribution shifts from prompt probing.
    \item Scalability Limitations. Existing pipelines require hundreds of iterations for prompt probing and image evaluation, making them impractical for large-scale auditing.
\end{itemize}

These challenges render the current auditing pipelines, whose reliance on observable behaviors, \ie, prompts and outputs, which is unfortunately brittle, expensive, and unreliable in real-world auditing scenarios. Therefore, we ask a more direct and scalable question: \textbf{what has the model internally learned, independent of its prompts or outputs?} This motivates the need for \textbf{a model-centric auditing framework} that avoids reliance on prompts or outputs and instead operates directly on the model's internal behavior.

\subsection{Prompt-Agnostic Design}
\label{sec:prompt_agnostic}

A core limitation of existing concept auditing pipelines is their dependence on observable behaviors—namely, carefully crafted input prompts and generated outputs. This reliance introduces inherent fragility: effective prompts are difficult to discover, model outputs are sensitive to slight input variations, and both are easily manipulated or obfuscated. These challenges hinder both the robustness and scalability of prompt-based auditing.

To address this, we adopt a fundamentally different perspective: rather than auditing the model through what it generates, we examine what it has learned. Specifically, we investigate whether concept auditing can be made prompt-agnostic by analyzing the model’s internal behavior during generation—thereby avoiding dependence on brittle prompt optimization.

\subsubsection{Motivation and Key Question}
Our approach is grounded in a critical question:
\textbf{Does prompt influence persist uniformly or diminish over time during the diffusion process?}
If the model’s reliance on prompt information fades in later denoising steps, it opens the door to auditing based on more stable, prompt-independent internal dynamics. To explore this, we analyze how the prompt embedding affects the model's behavior at different timesteps during generation.

\subsubsection{Theoretical Insight}

We begin with a theoretical analysis of the prompt's effect on the model's denoising process. In DMs, cross-attention modules align text and image features, making them key to understanding how prompts influence generation. Let $\vp$ be the text embedding extracted from the input prompt (Eq.~\ref{eq:denoising_error_prompt}). We derive the gradient of the cross-attention output with respect to the text embedding $\vp$:

\begin{lemma}
The gradient of the cross-attention function with respect to the text embedding $\vp$ is given by:
\begin{equation}
\label{eq:gradient}
\frac{\partial Y_i}{\partial \vp} =  (diag(S_i) - S_{i}S_i^T)(\frac{1}{\sqrt{d}}(XW_Q)W_K^T)\vp W_V + S_{i}W_V^T,\end{equation}
where $S$ denotes the cross-attention map (Eq.~\ref{eq:attention_map}), $S_i$ the $i$-th row of $S$, $X$ the image features calculated by the previous layers, and $Y_i$ the $i$-th output of the cross-attention layer (Eq.~\ref{eq:attention_output}). 
\end{lemma}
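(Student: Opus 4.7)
The plan is to differentiate $Y_i$ through the two distinct paths by which the prompt embedding $\vp$ enters the cross-attention output: (i) through the value matrix $V = \vp W_V$, and (ii) through the key matrix $K = \vp W_K$, which feeds the pre-softmax scores and hence $S$. Applying the product rule to $Y_i = S_i V$ gives
\begin{equation*}
\frac{\partial Y_i}{\partial \vp} \;=\; \frac{\partial S_i}{\partial \vp}\, V \;+\; S_i\, \frac{\partial V}{\partial \vp}.
\end{equation*}

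First I would dispatch the direct ``value'' branch. Because $V = \vp W_V$ is linear in $\vp$, the second term collapses to $S_i W_V^T$ under standard matrix-derivative conventions, which matches the second summand of the claim.

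Next, for the ``key'' branch, I would chain the composition $\vp \mapsto K = \vp W_K \mapsto z_i = Q_i K^T/\sqrt{d} \mapsto S_i = \mathrm{softmax}(z_i)$, where $Q_i = X_i W_Q$ is independent of $\vp$. Two standard identities handle this cleanly: the Jacobian of the softmax is $\partial S_i/\partial z_i = \mathrm{diag}(S_i) - S_i S_i^T$, and the Jacobian of the pre-softmax scores with respect to $\vp$ is $\partial z_i/\partial \vp = \frac{1}{\sqrt{d}}(X W_Q) W_K^T$, contributed purely by the linear dependence $K = \vp W_K$. Composing these two Jacobians and then post-multiplying by $V = \vp W_V$ yields the first summand in the displayed expression. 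Finally I would sum the two branches and verify that the exact ordering and transposition pattern stated in the lemma is recovered.

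The main obstacle I expect is bookkeeping rather than conceptual difficulty: because $\vp$ is a token-by-embedding matrix rather than a vector, $\partial Y_i/\partial \vp$ is formally a higher-order tensor, and it is easy to misplace a transpose or contract over the wrong axis. The hard part will be fixing a single consistent layout convention (I would adopt numerator layout throughout), verifying dimension-by-dimension that $\mathrm{diag}(S_i) - S_i S_i^T$ is multiplied into $\partial z_i/\partial \vp$ in the correct order before being followed by $V = \vp W_V$, and confirming that the ``extra'' $\vp$ appearing inside the first summand is genuinely the value-projection factor and not an artifact of an ambiguous matrix-calculus convention.
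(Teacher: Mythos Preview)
Your proposal is correct and follows essentially the same route as the paper: both apply the product rule to $Y_i = S_i(\vp W_V)$, handle the value branch via the linearity of $V=\vp W_V$ to obtain $S_i W_V^T$, and handle the key branch by composing the softmax Jacobian $\mathrm{diag}(S_i)-S_iS_i^T$ with the Jacobian of the pre-softmax scores $\frac{1}{\sqrt{d}}(XW_Q)W_K^T$. Your caution about tensor bookkeeping is well placed, since the paper's derivation also leaves the precise index conventions somewhat implicit.
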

The full proof is available in Appendix (Section~\ref{sec:proof}). 

\begin{assumption}
\label{ass:embedding}
We assume that text embedding $\vp$ is compact and $\|\vp\| \le \vp^*$. 
\end{assumption}
ASSUMPTION \ref{ass:embedding} is satisfied in practice, as the text embedding P is normalized by the text encoder before being used in DMs. For example, Stable Diffusion uses a CLIP text encoder to extract the text embedding from the input prompt. The output embedding of CLIP is processed using layer normalization~\cite{lei2016layer}, defined as:
\begin{equation}
\text{LN}(\vx)= \frac{\vx-\mu(\vx)}{\sqrt{\sigma^2(\vx) + \epsilon}} \odot \boldsymbol\gamma + \boldsymbol\beta,
\end{equation}
where $\mu(\vx)= \frac{1}{D} \sum_{d=1}^D x_d$ and $\sigma^2(\vx)= \frac{1}{D}\sum_{d=1}^D (x_d - \mu(\vx))^2$ calculates the mean and variance of $x_d$ across layers. Here, $\boldsymbol{\gamma}, \boldsymbol{\beta} \in \mathbb{R}^D$ are learnable scaling factors.
Similarly, the input of each cross-attention layer is applied by layer normalization, which loosely bounds $\|\vp\|$ and ensures Assumption~\ref{ass:embedding} holds.

\begin{theorem}
\label{thm:cross_attn}
The cross-attention function is Lipschitz continuous with respect to $\vp$ under Assumption~\ref{ass:embedding}.
\end{theorem}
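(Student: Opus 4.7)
The plan is to prove Lipschitz continuity by deriving a uniform bound on the operator norm of the gradient given by Lemma~1, and then invoking the mean value theorem. Concretely, if I can show that there exists a constant $L > 0$ such that $\left\| \partial Y_i / \partial \vp \right\| \le L$ for every admissible $\vp$ and every output index $i$, then integrating along the segment connecting any two embeddings $\vp, \vp'$ immediately yields $\|Y(\vp) - Y(\vp')\| \le L\,\|\vp - \vp'\|$.

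First, I would bound each factor in the expression from Lemma~1 separately. The matrix $\operatorname{diag}(S_i) - S_i S_i^{T}$ is the Jacobian of the softmax applied to a probability vector; because $S_i$ has nonnegative entries summing to one, its spectral norm is at most $1$ (in fact at most $1/2$). The projection matrices $W_Q, W_K, W_V$ are fixed after fine-tuning and therefore have finite operator norms. By Assumption~\ref{ass:embedding}, $\|\vp\| \le \vp^{*}$. Collecting these observations gives a bound of the form
\begin{equation}
\Bigl\| \frac{\partial Y_i}{\partial \vp} \Bigr\| \;\le\; \frac{1}{\sqrt{d}}\,\|X\|\,\|W_Q\|\,\|W_K\|\,\vp^{*}\,\|W_V\| \;+\; \|W_V\|,
\end{equation}
so the remaining task reduces to controlling $\|X\|$, the latent image features entering the cross-attention layer. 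I would handle this exactly analogously to the justification following Assumption~\ref{ass:embedding}: practical U-Net backbones (such as that of Stable Diffusion) place a layer/group normalization immediately before each cross-attention block, which loosely bounds $\|X\| \le X^{*}$ for some constant $X^{*}$ independent of $\vp$.

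With this, setting $L := \tfrac{1}{\sqrt{d}}\,X^{*}\,\|W_Q\|\,\|W_K\|\,\vp^{*}\,\|W_V\| + \|W_V\|$ and taking the max over $i$ (possibly times a dimension factor to pass from per-row bounds to the full map $\vp \mapsto Y$) gives a finite uniform Lipschitz constant. The conclusion then follows from the mean value inequality applied to the smooth map $\vp \mapsto Y$.

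The main obstacle I expect is the rigorous justification of the uniform bound on $\|X\|$, since unlike $\vp$ it is not directly normalized by the CLIP encoder and in principle depends on the latent state $\vz_t$ and on $\vp$ through earlier cross-attention layers. I would address this by restricting attention to a single cross-attention layer (treating $X$ as a fixed input to that layer), and appealing to the normalization layers that precede it; extending to the full U-Net composition would then follow by composing layer-wise Lipschitz constants. A secondary subtlety is ensuring that the softmax Jacobian bound holds uniformly in $\vp$, which it does since it depends only on the output probability vector $S_i \in \Delta$, but I would state this explicitly to avoid hidden $\vp$-dependence slipping into the constants.
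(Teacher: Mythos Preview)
Your proposal is correct and follows essentially the same route as the paper: bound the gradient from Lemma~1 using the fixed weight matrices, the softmax Jacobian bound, and Assumption~\ref{ass:embedding}, then conclude Lipschitz continuity from bounded derivative. The paper handles your main anticipated obstacle by simply declaring $X$ to be ``given and fixed'' at inference (exactly your single-layer restriction), absorbing it into a constant $C_1 = \|W_Q(W_KX)^TW_V/\sqrt{d}\|$ rather than invoking normalization layers; your treatment is if anything more careful than the paper's.
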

\begin{proof}
During inference, the parameters $W_Q$, $W_K$, and $W_V$ are fixed, and the targeted image features $X$ are also given and fixed. Furthermore, the softmax output satisfies $0 \leq S_{ij} \leq 1$. From Eq.~\ref{eq:gradient}, the gradient norm of $Y$ with respect to $\vp$ is bounded as:
\begin{align}
\label{eq:lipschitz}
\|\frac{\partial Y}{\partial \vp}\| &\leq C_1 \|diag(S_i) - S_{i}S_i^T\|\|\vp\| + C_2, \\
&\leq C_1 \vp^* + C_2, \nonumber
\end{align}
where $C_1 = \|W_Q (W_K X)^T W_V/\sqrt{d}\|$ and $C_2 = \|W_V\|$.
Since the cross-attention function is continuously differentiable and its gradient is bounded, it satisfies the conditions for Lipschitz continuity.
\end{proof}
\vspace{-0.5em}

\begin{figure}[!t]
\centering
\begin{subfigure}[h]{0.48\linewidth}
\centering
\includegraphics[width=\linewidth]{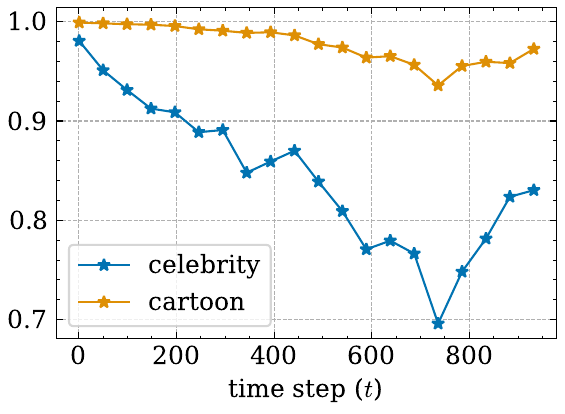}
\caption{$\|S\|$ over time $t$. }
\label{fig:attention_map_s}
\end{subfigure}
\begin{subfigure}[h]{0.48\linewidth}
\centering
\includegraphics[width=\linewidth]{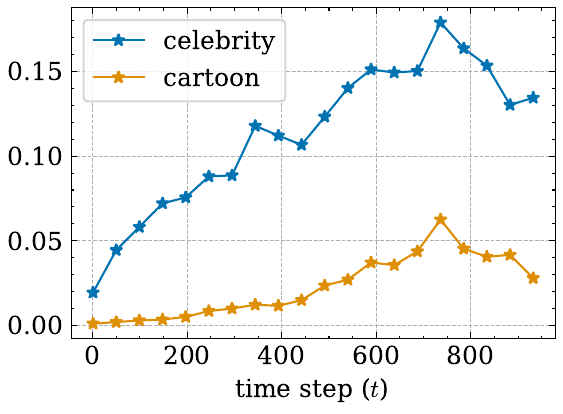}
\caption{$\|diag(S)-SS^T\|$ over time $t$. }
\label{fig:attention_curve}
\end{subfigure}
\vspace{-0.5em}
\caption{Impact of prompts on cross-attention. We present the results of attention map $\|S\|$ and $\|diag(S)-SS^T\|$ with respect to the first token [BOS]. The results are averaged over all celebrity and cartoon DMs, detailed in Section~\ref{sec:exp_setting}.}
\label{fig:attention_analysis}
\vspace{-1em}
\end{figure}

\begin{remark}
The gradient of the cross-attention function diminishes as the generation progresses (\ie, as $t$ becomes smaller).
\end{remark}

\subsubsection{Empirical Analysis}
\label{sec:method_prompt_emprical}
To complement this theoretical finding, we empirically examine how prompt influence evolves during the denoising process. Specifically, we track two metrics across timesteps: (1) the magnitude of the attention map $|S|$, and (2) the value of $|diag(S)-SS^T|$, which reflects prompt sensitivity. We focus on the [BOS] token, which typically carries the highest semantic weight in a prompt. 
As shown in Figure~\ref{fig:attention_analysis}, in the later stages of generation (smaller $t$), the attention map becomes more focused, with $|S|$ approaching $1$. Simultaneously, $|diag(S) - SS^T|$ decreases toward $0$, indicating a lower sensitivity to prompt variations.
This empirical trend confirms our theoretical result: in the later stages of the generation process, the model becomes increasingly confident about image contents and relies less on the conditioning prompt.

\begin{remark}
These findings indicate that the prompt's influence diminishes significantly in the later stages of diffusion. Hence, auditing the model during these stages can be done without relying on carefully engineered prompts.
\end{remark}

Note that Theorem~\ref{thm:cross_attn} applies to all token types, showing their influence decays during denoising. While our empirical studies focus on object-level nouns for practical reasons (\eg, legal needs for auditing), the theoretical result generalizes to all tokens (nouns, adjectives, and modifiers).

\subsection{Image-Free Design}
\label{sec:image_free}
While prompt-agnosticism addresses the fragility of input discovery, traditional output-based identification methods remain fundamentally limited. They depend on expensive image generation and are prone to concept drift, as external classifiers often misinterpret synthetic outputs. To overcome these issues, we introduce an image-free design that further reinforces our model-centric approach. \textit{Instead of evaluating what the model generates, we directly assess how it behaves during the denoising process}.

\subsubsection{Calibrated Error Measurement}
Our key insight is that the denoising errors of a fine-tuned DM differ from its base model when generating concepts the fine-tuned DM has learned. Specifically, if a fine-tuned DM can generate a new concept, the difference in denoising error between the fine-tuned DM and the base DM should be smaller for this concept compared to other, irrelevant concepts. This observation motivates the use of denoising errors as a reliable signal for concept auditing.

To quantify this, we introduce a \textbf{calibrated error measurement}, which captures the difference in denoising performance between the fine-tuned model's parameters \(W'\) and the base model's parameters \(W\). The calibrated error is defined as:

\begin{equation}
\label{eq:error_loss}
\mathcal{L}_{ce}^t \triangleq 
\E_{\vx\in \mathcal{D}_{target}} \left[\left\| {\vepsilon}_{W'}(\vz_t, \vp) - \vepsilon_0 \right\|^2 - \left\| {\vepsilon}_{W}(\vz_t, \vp) - \vepsilon_0 \right\|^2 \right],
\end{equation}
where $\mathcal{D}_{target}$ represents the images of the target concept.
Eq.~\ref{eq:error_loss} captures the signed difference. If the concept cannot be generated by the base model, its denoising loss is high. Fine-tuning reduces this loss, resulting in a negative $L_{ce}$ and indicating a stronger behavioral shift. Note that, given the prompt-agnostic design, the prompt $\vp$ in Eq.~\ref{eq:error_loss} is not required to be accurate.

\begin{figure}[!t]
\centering
\includegraphics[width=0.9\linewidth]{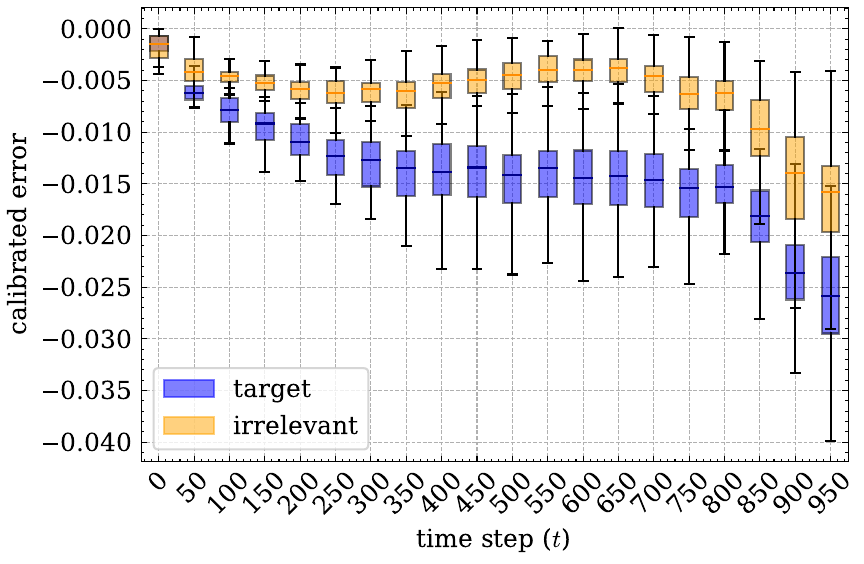}
\vspace{-0.5em}
\caption{Calibration error (CE) across denoising time steps. We compare the difference in denoising error between the fine-tuned model and its base model for target vs. irrelevant concepts. Early stages (large $t$) offer stronger separation but are more prompt-sensitive. This motivates our conditional calibrated error (CCE), which preserves early-stage signals while reducing prompt influence.}
\label{fig:loss_comparison}
\vspace{-0.5em}
\end{figure}

Using this metric, we evaluate whether a fine-tuned DM can generate a specific concept by comparing the calibrated error against a threshold $\tau$:
\begin{equation}
\label{eq:indicator}
\I(\mathcal{L}_{ce}^t < \tau),
\end{equation}
If the calibrated error falls below the threshold, the detector predicts that the fine-tuned DM can generate the target concept. Figure~\ref{fig:loss_comparison} demonstrates the effectiveness of this approach: the calibrated errors for target concepts are significantly lower than those for irrelevant concepts, enabling accurate differentiation.
Note that to ensure the analysis is not influenced by inaccurate prompts, we employ the original (accurate) prompts used during fine-tuning\footnote{The original prompts are used for diagnostic analysis only. All evaluations assume original prompts are unavailable.}. The calibration error is normalized by the denoising error of the base model to account for variations across time steps, ensuring a fair comparison.

\subsubsection{Conditional Calibrated Error Measurement}
While calibrated error effectively identifies target concepts, it exhibits varying performance across different stages of the DM generation process. 
As shown in Figure~\ref{fig:loss_comparison}, the calibration errors (using original prompts) in the early stages (large $t$) are more effective in concept identification.
This finding aligns with the recent studies on DM generation mechanisms~\cite{zhang2024cross,yi2024towards}: DMs primarily generate semantic information (\eg, structure) during early stages, while later stages focus on refining image details (\eg, texture). Since semantic information is more directly tied to concepts, details generated in later stages alone may lack sufficient distinction for effective concept auditing.

However, this presents a challenge for prompt-agnostic analysis. While the impact of prompts diminishes during the later stages, the calibrated errors at these stages are less effective for concept auditing. Conversely, early-stage calibrated errors are more effective but are influenced by prompts, creating a conflict between prompt-agnostic and image-free auditing designs.

To reconcile the tension between prompt-agnostic and image-free auditing, we propose a \textbf{conditional calibrated error (CCE)} that enables effective concept auditing across all generation stages.
Specifically, in the early stages, we calculate the calibrated error by freezing the original parameters of the cross-attention layers from the base model and apply fine-tuned parameters only to other layers (mainly self-attention layers). This setup reduces prompt influence during early denoising stages.
In the later stages, we use fine-tuned parameters across all layers to calculate the calibrated error, as the impact of prompts is already reduced.
The conditional calibrated error is formally defined as:
\begin{equation}
\label{eq:final_loss}
\mathcal{L}_{cce}^{t} \triangleq 
\begin{cases}
\E \left\| {\vepsilon}_{W'}(\vz_t, \vp) - \vepsilon_0 \right\|^2 - \left\| {\vepsilon}_{W}(\vz_t, \vp) - \vepsilon_0 \right\|^2, \quad t \le \gamma\\
\E \left\| {\vepsilon}_{W''}(\vz_t, \vp) - \vepsilon_0 \right\|^2 - \left\| {\vepsilon}_{W}(\vz_t, \vp) - \vepsilon_0 \right\|^2, \quad t > \gamma,
\end{cases}
\end{equation}
where $W''$ denotes the parameters excluding fine-tuned updates to cross-attention layers, and $\gamma$ is the cutoff time step separating early and later stages. In this work, $\gamma$ is set to the midpoint of the generation process: $\gamma = T/2$. 

By combining early-stage analysis with prompt-insensitive cross-attention and late-stage behavior with reduced prompt dependence, CCE enables robust and prompt-agnostic concept auditing across the entire generation process.

\subsection{Unsupervised Concept Detector}
In an ideal scenario, concept auditing could involve calculating the conditional calibrated error for a set of images that the fine-tuned DM can generate based on Eq.~\ref{eq:final_loss} and comparing these errors to a predefined threshold $\tau$ (Eq.~\ref{eq:indicator}). This threshold could be established by analyzing the error distributions of both target and irrelevant concepts, as shown in Figure~\ref{fig:loss_comparison}. 

However, in real-world scenarios, especially on community platforms such as Civitai~\cite{civitai_website}, the availability of generated images is often severely limited. For many fine-tuned models, users may upload fewer than $10$ example images, making it difficult or infeasible to learn a reliable threshold through supervised comparison between target and irrelevant concepts. 

To address this practical challenge, we introduce an \textbf{unsupervised concept detector} that eliminates the need for a large collection of target concept images. Instead of relying on positive examples, our framework leverages a set of irrelevant images, representing concepts that the DM is not expected to generate, and uses their conditional calibrated errors to train an outlier detection model. This model captures the typical error distribution associated with unlearned or unrelated concepts. At inference time, if the calibrated errors corresponding to a candidate concept deviate significantly from this baseline and are flagged as outliers, the detector concludes that the fine-tuned DM has likely learned to generate the target concept. 
Note that this approach requires only a small number of target concept images for inference, and no target examples are needed during training.
We implement the detector using Isolation Forest~\cite{liu2008isolation}, a widely used method for unsupervised outlier detection.

\begin{figure}[!t]
\centering
\includegraphics[width=\linewidth]{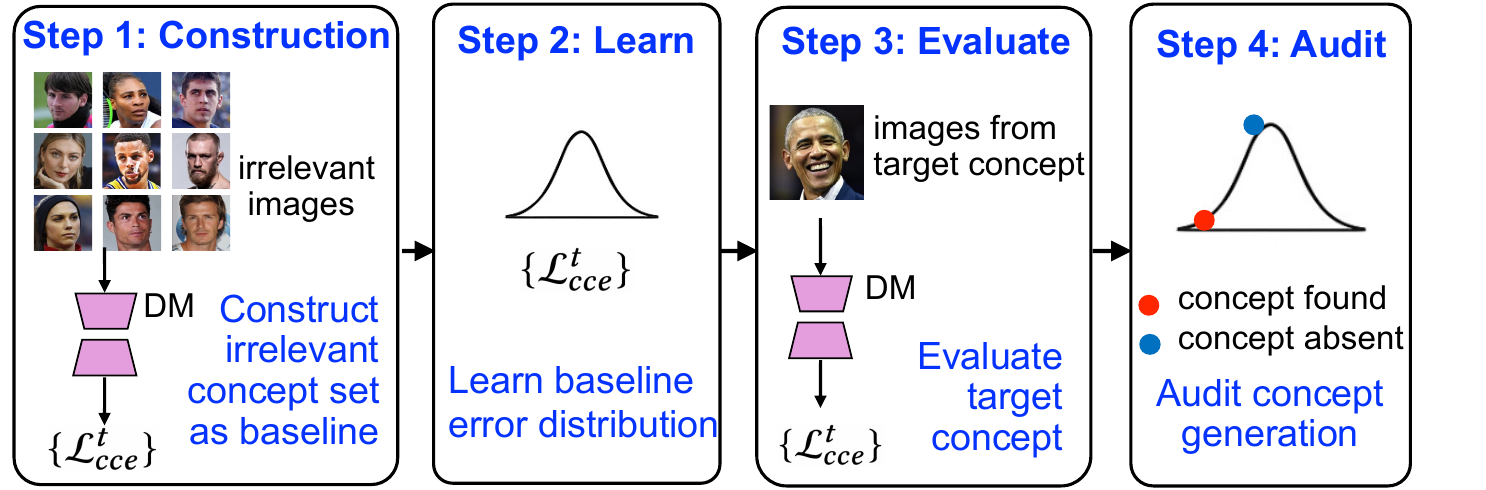}
\vspace{-1em}
\caption{Overview of \proposed Pipeline.}
\label{fig:pipeline}
\vspace{-1em}
\end{figure}

\subsection{Overall \proposed Pipeline}
We present the overall framework of \proposed, which integrates the prompt-agnostic and image-free design principles, along with unsupervised concept detector. 
The complete procedure of \proposed consists of the following four steps, as illustrated in Figure~\ref{fig:pipeline}.
A detailed algorithm is provided in the Appendix (Algorithm~\ref{alg:paia}).
\begin{itemize}[leftmargin=1em]
\item \textbf{Step 1: Construct Irrelevant Concept Set as Baseline.} 
Collect a set of images representing concepts the DM is not expected to generate. For each image, compute conditional calibrated error $\mathcal{L}_{cce}^t$ (Eq.~\ref{eq:final_loss}) across multiple time steps. CCE integrates early-stage measurements with frozen cross-attention parameters and late-stage measurements using all fine-tuned parameters.

\item \textbf{Step 2: Learn Baseline Error Distribution.} 
Train an outlier detection model using the constructed irrelevant concept set $\{\mathcal{L}_{cce}^t\}$. This model captures the typical internal behavior of unlearned concepts across all denoising stages.
\item \textbf{Step 3: Evaluate Target Concept.} 
Using a small number of images representing the target concept, compute their CCE values, again using only late-stage denoising, using the same procedure as in Step 1. 
\item \textbf{Step 4: Audit Concept Generation.} 
Apply the trained outlier detector to the target concept's calibrated errors. If these values are statistically distinct from the baseline (\ie, flagged as outliers), the model concludes that the DM has learned to generate the target concept.
\end{itemize}

\section{Experiments}
\subsection{Experimental Settings}
\label{sec:exp_setting}
We first evaluate the performance of \proposed on two categories of fine-tuned DMs and compare it against six baselines. 
Details on dataset construction, image examples, and random prompt term lists are provided in Appendix Section~\ref{sec:benchmark}.

\subsubsection{Datasets and Fine-tuned DMs}
We construct two benchmark datasets in the categories of \textit{celebrity} and \textit{cartoon}, each consisting of multiple visual concepts. In the celebrity category, we include $50$ individual celebrities, each represented by approximately $20$ images, sourced from the Celebrity-1000 dataset~\footnote{\url{https://huggingface.co/datasets/tonyassi/celebrity-1000}}. For cartoons, we collect $10$ distinct characters (e.g., Pikachu, Bart Simpson, Rick Sanchez) using publicly available datasets from HuggingFace\footnote{\url{https://huggingface.co/datasets}}, each containing on average $417$ images.

We generate textual prompts for each image using BLIP~\cite{li2022blip}. 
Datasets are split evenly into training and testing subsets; training data is used for LoRA fine-tuning, and test data is used for auditing evaluation.
The sample visualizations are included in Appendix Section~\ref{sec:benchmark}.

All DMs are fine-tuned from Stable Diffusion 1.5\footnote{\url{https://huggingface.co/stabilityai/stable-diffusion-1-5}} using Low-Rank Adaptation (LoRA), with rank $r = 64$, a learning rate of $1e$-$4$, and the AdamW optimizer. Each model is trained for $50$ epochs. For each category, we fine-tune models with $1$, $2$, and $3$ concepts per model to evaluate multi-concept auditing. This results in $50$ celebrity and $10$ cartoon models for each setting.

\subsubsection{\proposed Settings}
\label{sec:proposed_setting}
\noindent\textbf{Outlier detection model.}
In \proposed, we employ Isolation Forest as the detector, using the default implementation in~\cite{liu2008isolation} with its implicit automatic decision threshold (no additional calibration). We train the model on the irrelevant images, which cannot be generated by the DM model, and then predict if the images of the target concept are irrelevant. If not, then we predict the DM can generate this concept and vice versa.

The outlier detection model is trained with $100$ images with irrelevant concepts. In our evaluation, we randomly select images from the same category (celebrity or cartoon) but with different concepts. 
In practice, a straightforward strategy for constructing such a set is to use newly generated or low-popularity concepts, such as a lesser-known public figure or a recently created character.

\noindent\textbf{Prompt strategies.}
We assume we have no access to the original prompt with accurate trigger words for the target concept, we deploy three strategies to generate pseudo prompts that are fed into DMs for concept detection. The impact of different prompt strategies is investigated in Section~\ref{sec:prompt_generation}.
\begin{itemize}[leftmargin=1em]
    \item \textbf{Caption}. The prompt is synthesized using an image captioning model, GenerativeImage2Text~\cite{wang2022git}. To make a fair analysis, this image captioning model is selected to differ from the BLIP model used in data collection. The generated pseudo-prompts are then different from the ground-truth prompts, making it challenging for concept detection.
    In the evaluation, we use this strategy by default if not explicitly mentioned.

    \item \textbf{Random}. The prompt is composed of five randomly selected terms from a common word list for generating DMs, \eg, ``natural lighting,'' ``best quality,'' ``ultra detailed.'' 
    A complete list is presented in Table 10 in Appendix Section~\ref{sec:prompt_example}.

    \item \textbf{Null}. The null text, \ie, ``'', is employed as the prompt input.
\end{itemize}

\subsection{Baselines}
We consider baselines derived from two state-of-the-art prompt probing techniques and two image-based concept detectors.

\noindent\textbf{Prompt probing techniques:} We adopt two state-of-the-art prompt probing techniques and a naive probing approach using an image captioning method.
\begin{itemize}[leftmargin=1em]
    \item \textbf{MU}: UnlearnDiff~\cite{zhang2024generate} is a novel adversarial prompt generation method designed to evaluate the robustness of safety-driven unlearned DMs. UnlearnDiff optimizes the prompt via a variant of projected gradient descent (PGD) attack, Textgrad~\cite{houtextgrad}, which is tailored for discrete text optimization. UnlearnDiff leverages the denoising loss as the optimization objective, but it considers the denoising loss in the early generation stage (large $t$), which is more sensitive to prompts. Additionally, during the optimization, UnlearnDiff still needs an external classifier to determine whether the prompt can generate the image of the target concept. We follow their open-source implementation\footnote{\url{https://github.com/OPTML-Group/Diffusion-MU-Attack}}.
\item \textbf{PEZ}: Hard Prompts Made Easy (PEZ)~\cite{wen2024hard} optimizes hard prompts for text-to-image and text-only applications using a gradient-based discrete optimization technique. By iteratively projecting continuous embeddings onto discrete token spaces, PEZ balances the automation of gradient-based optimization with the interpretability and flexibility of hard prompts. We follow their open-source implementation\footnote{\url{https://github.com/YuxinWenRick/hard-prompts-made-easy}}.
\item \textbf{Naive}: We consider an image captioning model, GIT~\cite{wang2022git}, to derive prompts from the target images.
\end{itemize}

\noindent\textbf{Image-based concept identifiers:} We adopt two image-based concept identifiers.
\begin{itemize}[leftmargin=1em]
    \item \textbf{Image Classifier:} We train a ConvNeXt~\cite{liu2022convnet}-based image classifier to identify celebrity/cartoon concepts. The classifier is pre-trained on ImageNet and fine-tuned on the collected training data. After fine-tuning, the image classifier achieves $89.2$\% and $98.1$\% accuracy on celebrity and cartoon data, respectively.
    \item \textbf{CLIP:} This classifier utilizes the pre-trained CLIP~\cite{radford2021learning} model's capability to align text and image embeddings within a shared multimodal space. It performs classification by extracting an image’s embedding using the CLIP image encoder and comparing it with class label embeddings produced by the CLIP text encoder, assigning the image to the class with the highest cosine similarity. The CLIP classifier is widely adopted for classifying generated concepts due to its robustness and flexibility.
\end{itemize}

By combining three prompt probing techniques and two image-based concept identifiers, we consider six baselines in the evaluation: Naive Classifier, Naive CLIP, MU Classifier, MU CLIP, PEZ Classifier, and PEZ CLIP.

\subsection{Comparison with Baselines}
\label{sec:evaluation_baseline_comp}

\begin{table}[!tb]
\centering
\caption{Performance comparison on Celebrity DMs. We report accuracy, precision, recall, F1 score, and auditing time for auditing a concept using $10$ images.}
\label{tab:comparison_celebrity}
\resizebox{\linewidth}{!}{%
\begin{tabular}{@{}lrrrrr@{}}
\toprule
Detector & Accuracy & Precision & Recall & F1 Score & Time (s) \footnote{Left: average time per a curated set of concepts; Right: average time per single concept}\\ \midrule
Naive Classifier & 53\% & 100\% & 6\% & 12\% & 75.45 \\
Naive CLIP & 57\% & 64\% & 32\% & 43\% & 76.48 \\
MU Classifier & 58\% & 55\% & 92\% & 69\% & 2010.28 \\
MU CLIP & 51\% & 51\% & 98\% & 67\% & 1560.21 \\
PEZ Classifier & 75\% & 75\% & 76\% & 75\% & 1162.22 \\
PEZ CLIP & 59\% & 56\% & 80\% & 66\% & 933.32 \\ 
\proposed & 92\% & 90\% & 94\% & 92\% & 54.14/483.42 \\
\bottomrule
\end{tabular}%
}
\vspace{-1em}
\end{table}

\begin{table}[!tb]
\centering
\caption{Performance comparison on Cartoon DMs.}
\label{tab:comparison_cartoon}
\vspace{-0.5em}
\resizebox{\linewidth}{!}{%
\begin{tabular}{@{}lrrrrr@{}}
\toprule
Detector & Accuracy & Precision & Recall & F1 Score & Time (s) \\ \midrule
Naive Classifier & 79\% & 89\% & 66\% & 76\% & 76.54 \\
Naive CLIP & 63\% & 62\% & 68\% & 65\% & 77.20 \\
MU Classifier & 60\% & 56\% & 100\% & 71\% & 1005.20 \\
MU CLIP & 44\% & 47\% & 82\% & 59\% & 1113.03 \\
PEZ Classifier & 77\% & 68\% & 100\% & 81\% & 1053.47 \\
PEZ CLIP & 55\% & 54\% & 73\% & 62\% & 931.33 \\
\proposed & {92}\% & {86}\% & {100}\% & {93}\% & {56.55}/{485.30} \\ \bottomrule
\end{tabular}%
}
\vspace{-1em}

\end{table}

\vspace{.5em}
We compare \proposed with baselines and show the results in Tables~\ref{tab:comparison_celebrity} and~\ref{tab:comparison_cartoon}. 
\proposed consistently achieves the highest accuracy (92\% and 91\%), precision (90\% and 89\%), and recall (94\% for both categories), demonstrating its superior performance in concept auditing.
Among the baselines, the PEZ Classifier performs relatively well with accuracy of 75\% and 77\%, but still significantly lower than \proposed.
Other methods, such as MU Classifier and MU CLIP, achieve high recall but suffer from poor precision and accuracy, indicating a high positive rate.

Auditing times (in seconds) for analyzing a concept with $10$ images are also reported in the tables.
For \proposed, we evaluate auditing time in two scenarios: (1) analyzing a single concept for a fine-tuned DM and (2) analyzing multiple concepts, which is more representative of practical applications.
In the second scenario, \proposed achieves significantly reduced auditing time, as the conditional calibrated errors for irrelevant images (Step 2) are computed only once and reused for subsequent concepts.

Compared to baselines, \proposed demonstrates much greater efficiency and scalability. Prompt probing techniques (MU and PEZ) often require considerably longer optimization times. Even against naive image-captioning-based methods, \proposed demonstrates slightly higher efficiency in the second scenario since it computes denoising errors at a limited number of time steps rather than performing the full generation process.

\subsection{Ablation Study}
This section systematically analyzes the key factors that may influence the performance of \proposed. 

\label{sec:prompt_generation}
\begin{figure}[!t]
\centering
\begin{subfigure}[h]{0.48\linewidth}
\centering
\includegraphics[width=0.95\linewidth]{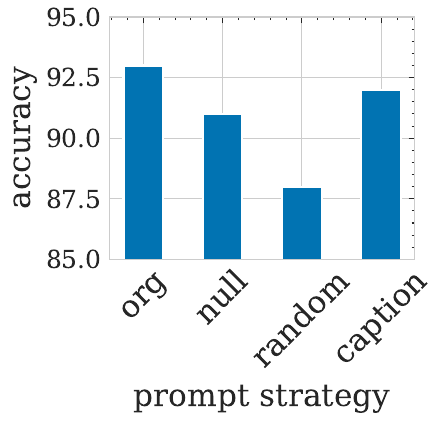}
\caption{Celebrity DMs}
\end{subfigure}
\hfill
\begin{subfigure}[h]{0.48\linewidth}
\centering
\includegraphics[width=0.95\linewidth]{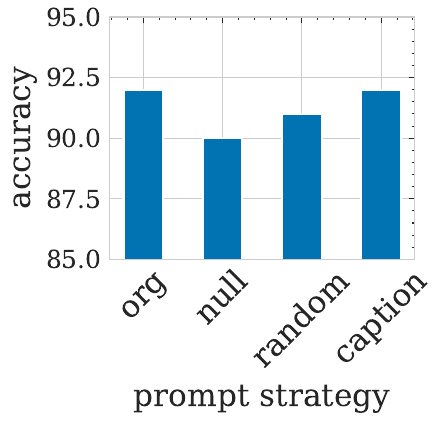}
\caption{Cartoon DMs}
\end{subfigure}
\vspace{-0.5em}
\caption{Auditing performance of \proposed with different prompt generation strategies.}
\label{fig:prompt_type}
\vspace{-1em}
\end{figure}

\vspace{.5em}
\noindent\textbf{Impact of Prompt Generation Methods:}
We evaluate \proposed using different prompt generation strategies, including Null, Random, and Caption (as described in Section~\ref{sec:proposed_setting}) and compare their performance with \proposed using the original (accurate) prompts (org). As shown in Figure~\ref{fig:prompt_type}, the Caption strategy achieves nearly identical performance to the original prompts, while the Null and Random strategies exhibit only slightly lower performance. This demonstrates the robustness of \proposed's prompt-agnostic design, ensuring effective auditing even with less accurate prompts.

\vspace{.5em}
\noindent\textbf{Effectiveness of Conditional Calibrated Error Measurement:}
We analyze the effectiveness of conditional calibrated error measurement (Eq.~\ref{eq:final_loss}). 
We compare the auditing performance across four settings: 1) freezing cross-attention layers at the early stages (\ie, w/ conditional calibration), 2) not freezing any layers (\ie, w/o conditional calibration), 3) freezing cross-attention layers at all stages (\ie, w/ self-attention only), and 4) freezing self-attention layers at all stages (\ie, w/ cross-attention only).
As shown in Figures~\ref{fig:selective_celebrity} and~\ref{fig:selective_cartoon}, by freezing parameters in cross-attention layers at the early stages of generation, conditional calibration error improves the performance of \proposed in most cases.

\begin{figure}[!t]
\centering
\includegraphics[width=0.7\linewidth]{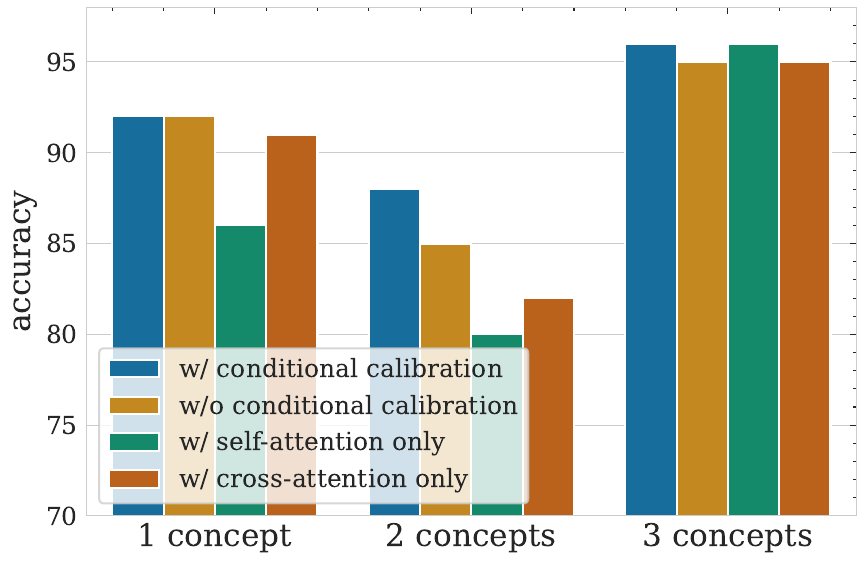}
\vspace{-0.5em}
\caption{Effectiveness of conditional calibration error measurement on Celebrity DMs.}
\label{fig:selective_celebrity}
\vspace{-1em}
\end{figure}

\begin{figure}[!t]
\centering
\includegraphics[width=0.7\linewidth]{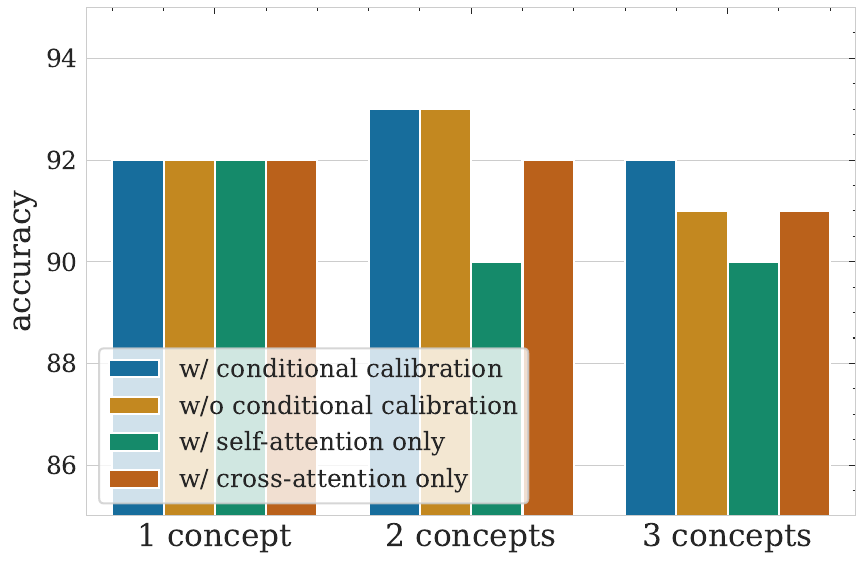}
\vspace{-0.5em}
\caption{Effectiveness of conditional calibration error measurement  on Cartoon DMs.}
\label{fig:selective_cartoon}
\vspace{-1em}
\end{figure}

\vspace{.5em}
\noindent\textbf{Effectiveness of Outlier Detection Algorithms:}
We investigate the effectiveness of different outlier detection methods, including Isolation Forest (IF)~\cite{liu2008isolation}, Angle-based outlier detection (ABOD)~\cite{kriegel2008angle}, k-Nearest Neighbors (kNN)~\cite{laaksonen1996classification}, Gaussian Mixture Model (GMM)~\cite{aggarwal2017introduction}, and One-class SVM (OneSVM)~\cite{manevitz2001one}. We evaluate these algorithms on both celebrity and cartoon DMs with 1 concept. Among the algorithms investigated, Isolation Forest (the default algorithm in \proposed) consistently achieved the highest accuracy and F1 scores, but other robust outlier detection algorithms like ABOD and kNN also performed competitively, with slightly lower precision.

\begin{table}[!tb]
\centering
\caption{Effectiveness of outlier detection algorithms on Celebrity DMs.}
\label{fig:outlier_celebrity}
\vspace{-0.5em}
\resizebox{0.9\linewidth}{!}{%
\begin{tabular}{@{}lrrrr@{}}
\toprule
Algorithm & Accuracy & Precision & Recall & F1 Score \\ \midrule
IF & 92\% & 90\% & 94\% & 92\% \\
ABOD & 90\% & 84\% & 98\% & 91\% \\
kNN & 89\% & 83\% & 98\% & 90\% \\
GMM & 53\% & 52\% & 100\% & 68\% \\
OneSVM & 64\% & 58\% & 100\% & 74\% \\ \bottomrule
\end{tabular}%
}
\vspace{-0.5em}
\end{table}

\begin{table}[!tb]
\centering
\caption{Effectiveness of outlier detection algorithms on Cartoon DMs.}
\label{fig:outlier_cartoon}
\vspace{-0.5em}
\resizebox{0.9\linewidth}{!}{%
\begin{tabular}{@{}lrrrr@{}}
\toprule
Algorithm & Accuracy & Precision & Recall & F1 Score \\ \midrule
IF & 92\% & 86\% & 100\% & 93\% \\
ABOD & 88\% & 81\% & 100\% & 89\% \\
kNN & 89\% & 82\% & 100\% & 90\% \\
GMM & 57\% & 54\% & 100\% & 70\% \\
OneSVM & 73\% & 65\% & 100\% & 79\% \\ \bottomrule
\end{tabular}%
}
\vspace{-0.5em}
\end{table}

\vspace{.5em}
\noindent\textbf{Performance on Individual Time Steps:}
We investigate the performance of \proposed when predicting based on individual time steps during the generation process. As shown in Figure~\ref{fig:timestep}, \proposed performs best in the middle stages of generation. This observation aligns with our previous analysis: the prompt-agnostic design of \proposed is highly effective in the later stages, while the image-free design excels in the early stages. However, relying on individual time steps may not fully utilize the potential of \proposed, which analyzes across all time steps. This highlights the effectiveness of the conditional calibration error, which combines the strengths of both designs across all stages to deliver robust predictions.
\begin{figure}[!tb]
\centering
\includegraphics[width=0.5\linewidth]{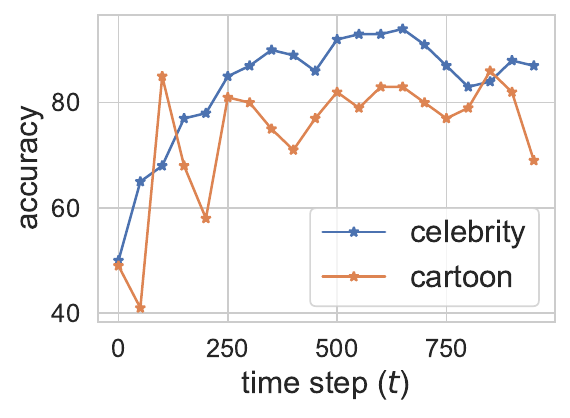}
\vspace{-0.5em}
\caption{Performance on different time steps.}
\label{fig:timestep}
\vspace{-0.5em}
\end{figure}

\begin{figure}[!t]
\centering
\begin{subfigure}[h]{0.45\linewidth}
\centering
\includegraphics[width=0.9\linewidth]{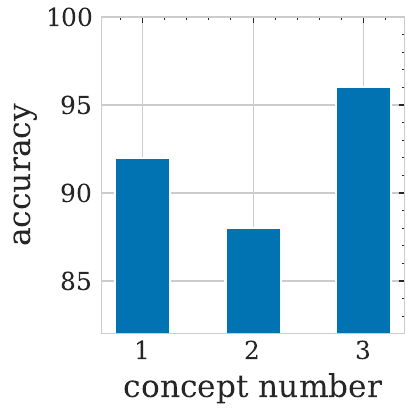}
\caption{Celebrity DMs}
\label{fig:concept_num_celebrity}
\end{subfigure}
\hfill
\begin{subfigure}[h]{0.45\linewidth}
\centering
\includegraphics[width=0.9\linewidth]{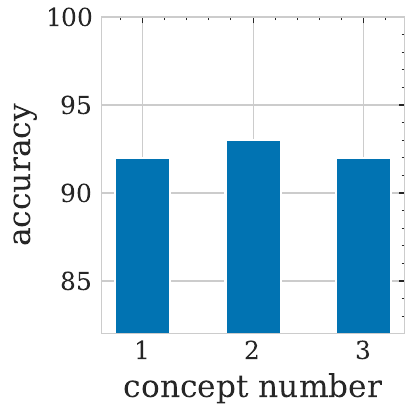}
\caption{Cartoon DMs}
\label{fig:concept_num_cartoon}
\end{subfigure}
\vspace{-0.5em}
\caption{Impact of concept number in each fine-tuned DM.}
\label{fig:impact_concept_number}
\vspace{-0.5em}
\end{figure}

\vspace{.5em}
\noindent\textbf{Impact of Number of Concepts:}
We investigate how the number of concepts in fine-tuned DMs affects auditing performance. As shown in Figure~\ref{fig:impact_concept_number}, there is no significant difference in performance when more concepts are included, except for a slight degradation when fine-tuning DMs on two celebrity concepts. This suggests that \proposed remains robust across a range of concept complexities in fine-tuned DMs.

\vspace{.5em}
\noindent\textbf{Prompt Sensitivity Analysis:} To quantify the prompt sensitivity in early stages, we compared auditing accuracy across three prompt strategies: caption, random, and null, where the ``caption'' strategy is most similar to the original prompt, compared to ``random'' and ``null'' strategies. As shown in Figure~\ref{fig:prompt_sens}, auditing performance is more sensitive to prompts in early stages (large $t$), as caption prompts significantly outperform random and null prompts. In contrast, accuracy degrades at later stages (small $t$) for all prompts, where calibrated errors become less effective in auditing. These findings motivate our CCE design, which combines prompt-aware early-stage and prompt-agnostic late-stage signals. 

\begin{figure}[!t]
\centering
\begin{subfigure}[h]{0.48\linewidth}
\centering
\includegraphics[width=\linewidth]{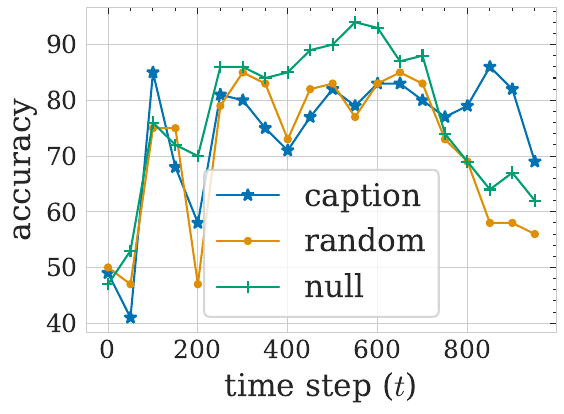}
\caption{celebrity}
\end{subfigure}
\begin{subfigure}[h]{0.48\linewidth}
\centering
\includegraphics[width=\linewidth]{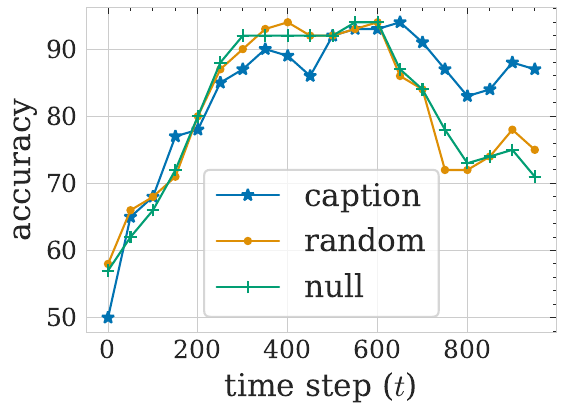}
\caption{cartoon}
\end{subfigure}
\caption{Prompt sensitivity analysis in different stages.}
\vspace{-0.5em}
\label{fig:prompt_sens}
\vspace{-1em}
\end{figure}

\subsection{Performance of \proposed Against Adaptive Attacks}
\label{sec:eval_adaptive}
We evaluate the robustness of \proposed under adaptive attack scenarios where attackers have full knowledge of the auditing framework and intentionally modify the fine-tuning procedure to evade detection. Specifically, we design five adaptive strategies that explicitly target the threat model and the core design of \proposed. The adaptive attacks are described below.
\begin{itemize}[leftmargin=1em]
    \item \textbf{Prompt Deviation Attack:} The attacker intentionally fine-tunes DMs using mismatched prompts (e.g., using ``Tom Hanks'' as the prompt for a model fine-tuned on ``Brad Pitt'' images). We train $50$ celebrity and $10$ cartoon models with deviated prompts to evaluate \proposed's robustness.
    \item \textbf{Limited Calibration Attack:} \proposed relies on estimating calibration errors on irrelevant data to train its outlier detector. In this attack, the auditor is restricted to a smaller irrelevant dataset, reducing the accuracy of calibration and weakening detection. 
    \item \textbf{Regularization Attack:} The attacker adds the calibrated error (Eq.~\ref{eq:error_loss}) as a regularization term during fine-tuning. This attack aims to explicitly suppress the behavioral difference between the fine-tuned model and the base model, thereby reducing the discriminative power of \proposed and bypassing the detection.
    \item \textbf{Early-Freezing Attack:} In this attack, the fine-tuning is restricted to the late stages (\ie, $t < \gamma$), leaving the DMs frozen in the early stages. Since early stages are more effective at revealing fine-tuning-induced behavioral shifts, by freezing the DMs during this stage, this attack aims to minimize the model changes detectable \proposed while still preserving the generation performance (by finetuning in the late stage).
    \item \textbf{Late-Freezing Attack:} DMs are fine-tuned only during early stages ($t > \gamma$), with late (prompt-sensitive) stages frozen. By avoiding updates to the stages most influenced by specific prompts, this attack aims to keep the DMs prompt-agnostic, not limiting to the specific prompts in fine-tuning, thus reducing prompt-induced divergence between the fine-tuned and base models.
\end{itemize}

These adaptive attacks are specifically designed to exploit the key ``white-box'' assumption. In this setting, the auditor knows the model parameters, but not the fine-tuning process. Accordingly, the adaptive attacks could modify aspects of fine-tuning process hidden from the auditor: Prompt Deviation Attack alters the prompt in the training data; Regularization Attack modifies the training loss by regularizing the calibration error; Early-Freezing Attack and Late-Freezing Attack change which parameters are updated during fine-tuning. By targeting unseen parts of the pipeline, these attacks represent realistic adaptive strategies under practical scenarios.

\vspace{0.5em}
\noindent\textbf{Performance against Prompt Deviation Attack:} As shown in Tables~\ref{tab:comparison_abnormal_celebrity} and~\ref{tab:comparison_abnormal_cartoon}, \proposed remains unaffected by Prompt Deviation Attack, achieving a comparable performance to those with normal prompts (Tables~\ref{tab:comparison_celebrity} and~\ref{tab:comparison_cartoon}). This is mainly due to our prompt-agnostic design. Across both datasets, \proposed consistently outperforms all baseline methods, achieving the highest accuracy, precision, recall, and F1 score. PEZ Classifier shows competitive performance across baselines, while methods like MU Classifier and MU CLIP continue to suffer from high false positive rates, leading to poor precision and low overall accuracy. Interestingly, these methods perform even worse than the naive prompt probing approaches, such as Naive Classifier and Naive CLIP, which do not rely on sophisticated optimization. This suggests that the MU methods face significant challenges when attempting to optimize for deviated prompts.
The results underscore \proposed's robustness against Prompt Deviation Attack.

We also observe that auditing performance under Prompt Deviation Attack is often better than under normal prompts. We attribute this to differences in the fine-tuned DMs rather than the detectors themselves, likely due to stronger memorization of deviated prompts in fine-tuned DMs, a phenomenon noted in backdoor attack research. For instance, Naive Classifier and Naive CLIP demonstrate improved auditing performance on deviated prompts, despite not utilizing any prompt information. This suggests that the difference arises from the performance of the fine-tuned DMs rather than the concept auditing methods.

\begin{table}[!tb]
\centering
\caption{Performance comparison under Prompt Deviation Attack (Celebrity DMs). \proposed outperforms baseline detectors, while achieving comparable performance to those without adaptive attacks in Table~\ref{tab:comparison_celebrity}.}
\label{tab:comparison_abnormal_celebrity}
\vspace{-0.5em}
\resizebox{0.9\linewidth}{!}{%
\begin{tabular}{@{}lrrrr@{}}
\toprule
Detector & Accuracy & Precision & Recall & F1 Score \\ \midrule
Naive Classifier & 75\% & 86\% & 60\% & 71\% \\
Naive CLIP & 77\% & 81\% & 70\% & 75\% \\
MU Classifier & 57\% & 54\% & 100\% & 70\% \\
MU CLIP & 57\% & 54\% & 100\% & 70\% \\
PEZ Classifier & 82\% & 81\% & 84\% & 82\% \\
PEZ CLIP & 65\% & 60\% & 92\% & 72\% \\
\textbf{\proposed} & 94\% & 89\% & 100\% & 94\% \\ \bottomrule
\end{tabular}%
}
\vspace{-1em}
\end{table}

\begin{table}[!tb]
\centering
\caption{Performance comparison under Prompt Deviation Attack (Cartoon DMs).
\proposed outperforms baseline detectors, while achieving comparable performance to those without adaptive attacks in Table~\ref{tab:comparison_cartoon}.
}
\label{tab:comparison_abnormal_cartoon}
\vspace{-0.5em}
\resizebox{0.9\linewidth}{!}{%
\begin{tabular}{@{}lrrrr@{}}
\toprule
Detector & Accuracy & Precision & Recall & F1 Score \\ \midrule
Naive Classifier & 85\% & 86\% & 84\% & 85\% \\
Naive CLIP & 55\% & 55\% & 58\% & 56\% \\
MU Classifier & 56\% & 53\% & 100\% & 69\% \\
MU CLIP & 45\% & 47\% & 86\% & 61\% \\
PEZ Classifier & 83\% & 75\% & 100\% & 85\% \\
PEZ CLIP & 53\% & 52\% & 72\% & 61\% \\
\textbf{\proposed} & 92\% & 86\% & 100\% & 93\% \\ \bottomrule
\end{tabular}%
}
\end{table}

\vspace{.5em}
\noindent\textbf{Performance against Limited Calibration Attack:} 
\proposed relies on accurate estimation of the conditional calibration error distributions for irrelevant images to train the outlier detector. 
To evaluate \proposed's robustness against Limited Calibration Attack, we vary the number of irrelevant images used. As shown in Figure~\ref{fig:num_calibration}, $120$ images are sufficient for reliable concept auditing. This number is realistic since irrelevant examples (\eg, less popular celebrities or new cartoon characters) are typically easy for platforms to collect. Adding more images beyond this threshold yields little improvement in accuracy while increasing auditing time.

\begin{figure}[!t]
\centering
\includegraphics[width=0.5\linewidth]{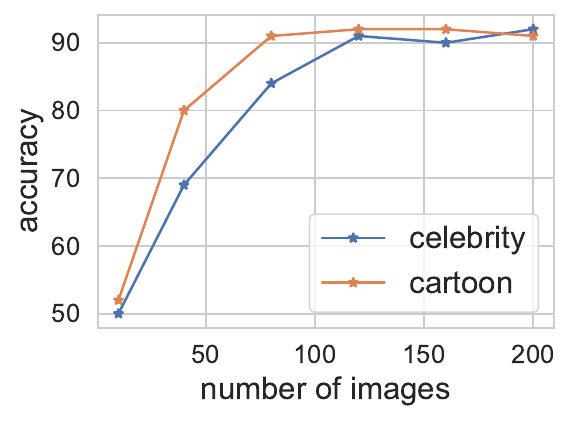}
\vspace{-1em}
\caption{Robustness of \proposed against Limited Calibration Attack. We evaluate \proposed's performance by limiting the number of irrelevant images used in the outlier detector. The default number of irrelevant images is $200$ (no attack).}
\label{fig:num_calibration}
\vspace{-0.5em}

\end{figure}

\begin{table}[!tb]
\caption{Robustness of \proposed against Regularization, Late-Freezing, and Early-Freezing Attacks. Auditing accuracy is reported on Celebrity and Cartoon DMs.}
\label{tab:robust_adaptive}
\vspace{-0.5em}
\resizebox{0.9\linewidth}{!}{%
\begin{tabular}{@{}lrr@{}}
\toprule
Adaptive Attacks & Celebrity DMs & Cartoon DMs\\ \midrule
No Attack      & 92\%   & 92\% \\
Regularization Attack       & 90\%   & 92\% \\
Late-Freezing Attack  & 94\%   & 90\% \\
Early-Freezing Attack & 93\%   & 91\% \\ \bottomrule
\end{tabular}%
}
\end{table}

\vspace{.5em}
\noindent\textbf{Performance Against Regularization, Late-Freezing, and Early-Freezing Attacks:} 
As shown in Table~\ref{tab:robust_adaptive}, \proposed consistently maintains high auditing accuracy across all three adaptive attacks. Specifically, the performance drops only marginally (within 2\%) for all the attacks, demonstrating strong robustness of \proposed. The Regularization Attack and Late-Freezing Attack show some effectiveness in the celebrity and cartoon categories, respectively, while the Early-Freezing Attack is ineffective in both. 
We further analyzed the results. In the Regularization Attack, we observe that, the calibrated error (Eq.~\ref{eq:error_loss}) is significantly reduced by the attack and becomes much smaller compared with the denoising error (Eq.~\ref{eq:denoising_error_prompt}). However, \proposed remains effective even when the behavioral difference becomes subtle.
The ineffectiveness of the Early-Freezing Attack is due to \proposed's prompt-agnostic design, which does not rely on specific training prompts and thus resists prompt-diversity obfuscation.
Finally, we believe both the Early- and Late-Freezing Attacks fail to bypass \proposed due to the flexibility of the Conditional Calibrated Error, which captures behavioral shifts across both early and late stages.

\section{Evaluation on Real-World DMs.}
In this section, we evaluate \proposed on real-world DMs on Civitai, a popular online DM-sharing platform~\cite{civitai_website}.

\subsection{Statistic Analysis on Civitai}
We first collect and analyze metadata from $100,200$ LoRA DMs available on Civitai at the time of collection.
Our analysis reveals that most DMs do not specify trigger words in their metadata (Figure~\ref{fig:civitai_trigger}). In some cases, ``unclaimed'' prompts like ``ohwx,'' ``sks'' are used. These prompts, which are not part of the model vocabulary, are often introduced during fine-tuning to trigger unique concepts. However, the absence of trigger words or the use of uncommon words creates significant challenges for concept auditing, as the intended concepts cannot be easily inferred from the metadata.

To perform a comprehensive evaluation, we conduct an analysis on the models hosted by Civitai. Based on the analysis result (Appendix Section~\ref{sec:app_civitai}), we identify four major topics of concepts: ``celebrity,'' ``game,'' ``cartoon,'' and ``movie,'' and a dominant DM: Stable Diffusion 1.5 (SD1.5).
Hence, we download and analyze the LoRA models from these four categories and with Stable Diffusion 1.5 as the base model.

\begin{figure}[!t]
\centering
\includegraphics[width=\linewidth]{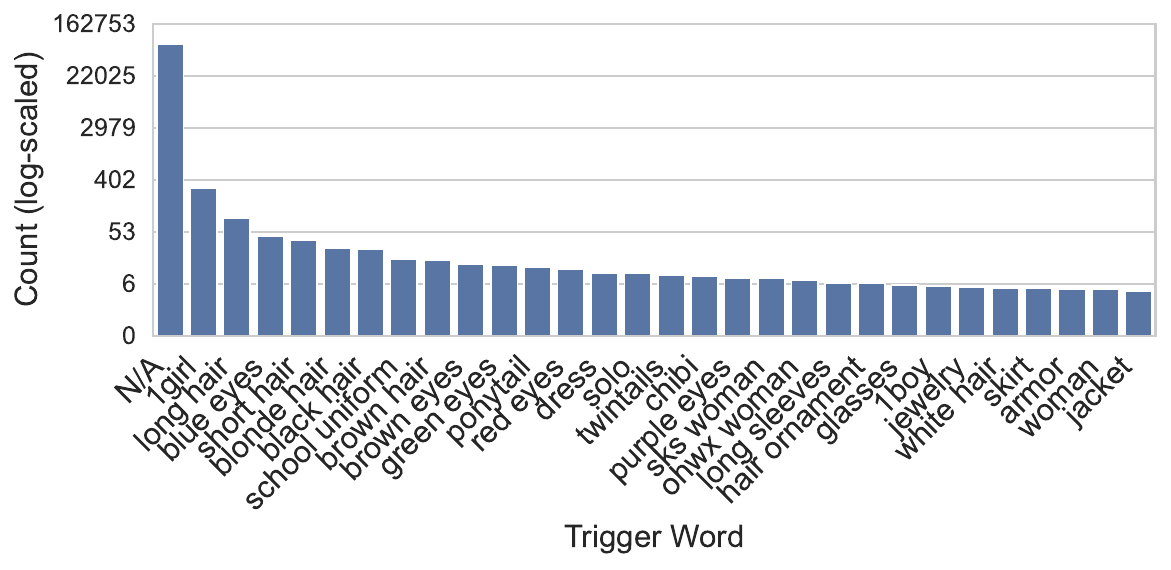}
\vspace{-1.5em}
\caption{Frequency of common trigger words on Civitai. N/A indicates that the user does not provide trigger words.}
\label{fig:civitai_trigger}
\vspace{-1em}
\end{figure}

\subsection{Model Collection}
The model collection process is as follows. First, we search for the LoRA models using the tags provided by each model in Civitai. For the celebrity topic, we use the tag ``celebrity,'' ``actress,'' ``real\_person,'' and ``actor.'' For the cartoon topic, we use the tag ``cartoon,'' ``manga,'' and ``anime.'' For the videogame topic, we use the tag of ``videogame,'' ``videogames,'' and ``video\_game.'' For the movie topic, we use the tag of ``movie.''
Second, we download the most frequently downloaded LoRA models in the search results. 
Third, since the model files may stored in different formats (\eg, different parameter names), we standardize all DMs to a consistent format before performing analysis. 
Lastly, we manually remove DMs with duplicate concepts (e.g., multiple DMs fine-tuned on the same celebrity or character) and exclude style-related LoRA models that are not concept-specific.
After filtering, we have collected $174$ celebrity models, $145$ cartoon models, $192$ videogame models, and $179$ movie models.

\subsection{Auditing Performance of \proposed}

\begin{table}[!tb]
\centering
\caption{Auditing performance of \proposed on Civitai DMs. }
\label{tab:detection_civitai}
\vspace{-0.5em}
\begin{tabular}{@{}lrrrr@{}}
\toprule
Category & Accuracy & Precision & Recall & F1 Score \\ \midrule
Celebrity & 93.97\% & 94.74\% & 93.10\% & 93.91\% \\
Cartoon & 96.90\% & 97.22\% & 96.55\% & 96.89\% \\
Videogame & 95.83\% & 94.44\% & 97.40\% & 95.90\% \\
Movie & 92.74\% & 91.80\% & 93.85\% & 92.82\% \\ \bottomrule
\end{tabular}%
\vspace{-0.5em}
\end{table}

We evaluate \proposed on the collected LoRA models across the four categories. For evaluation, we use 10 positive samples (images associated with the target concept) and 10 negative samples (images unrelated to the target concept) for each model. Table~\ref{tab:detection_civitai} summarizes the auditing performance of \proposed.
\proposed achieves high auditing accuracy, precision, and recall across all categories. Specifically, the cartoon category achieves the best performance with 97\% across all metrics, followed closely by the videogame and celebrity categories. The movie category shows slightly lower precision but maintains robust performance overall. These results demonstrate the effectiveness of \proposed in detecting diverse concepts across real-world DMs.

We further analyze the failure cases in the auditing results and observe that a significant portion of these failures arise from concepts that are closely related. Specifically, these include different concepts generated by the same creator or those originating from the same movie or TV show (\eg, Star Wars). Such scenarios introduce shared underlying features in fine-tuned DMs, making it challenging to distinguish between fine-grained variations. We will investigate this challenging scenario and enhance \proposed's fine-grained auditing capabilities in our future work.

\subsection{Auditing Performance on Rare Concepts}
\label{sec:eval_civitai_rare}
\begin{table}[!tb]
\caption{Auditing performance of \proposed on Civitai DMs with rare concepts.}
\label{tab:civitai_rare}
\vspace{-0.5em}
\begin{tabular}{lrr}
\hline
Category  & Common concepts & Rare concepts \\ \hline
Celebrity & 93.97\%         & 96.43\%       \\
Cartoon   & 96.90\%         & 93.75\%       \\
Videogame & 95.83\%         & 95.00\%       \\
Movie     & 92.74\%         & 96.15\%       \\ \hline
\end{tabular}%
\vspace{-1em}
\end{table}

Previous analysis explores \proposed's performance on the common concepts with most downloads. For rigorous evaluation, we further assess \proposed's performance on rare concepts (\eg, less popular identities, new characters). We collected Civitai LoRA models with fewer than $200$ downloads and manually filter out models with common concepts. In total, $81$ Civitai LoRA models have been collected, about $20$ per category. The outlier detector was trained on common concepts and tested on these rare ones. As shown in Table~\ref{tab:civitai_rare}, \proposed achieves comparable or even better performance. This results show that \proposed can be well generalized to rare concepts.

\section{Related Work}

\subsection{Inappropriate concept generation in DMs}
The creative potential of DMs has raised growing concerns about their ability to generate inappropriate concepts. Current approaches to preventing inappropriate concept generation can be categorized into three main strategies.
First, \textit{defensive mechanisms} aim to prevent the generation of certain concepts. For instance, add-on safety filters are employed to detect unsafe input texts or generated images~\cite{Rando2022RedTeamingTS}. However, these filters can often be disabled by users after downloading the models, leaving the core DMs capable of generating inappropriate content. Additionally, safety filters often lack generalization and are susceptible to adversarial attacks, undermining their reliability in diverse scenarios~\cite{Rando2022RedTeamingTS, Yang2023SneakyPromptER}.

Second, \textit{model refining methods} attempt to modify or remove latent representations of inappropriate concepts through techniques like concept removal and machine unlearning~\cite{Schramowski2022SafeLD, Gandikota2023Erasing,Kim2023TowardsSS, Li2024SafeGenMU}. However, these methods are computationally expensive and time-intensive, making them impractical for real-world applications, particularly for DM-sharing platforms or end users who cannot afford the high costs of retraining models.

Third, \textit{prompt probing techniques} have shown promise in auditing concepts by exploring prompts capable of generating specific concepts~\cite{Zhang2023ToGO, Chin2023Prompting4DebuggingRT}. These methods leverage adversarial attacks to optimize prompts,  aligning them with target concepts. For example, prompts can be optimized to match the text embeddings of unsafe target prompts filtered by safety mechanisms~\cite{Yang2023MMADiffusionMA, Chin2023Prompting4DebuggingRT}, or to align with image embeddings of a certain concept~\cite{Wen2023HardPM, Yang2023SneakyPromptJT}. 
However, these techniques rely heavily on accurate prompt probing and robust external detectors to guide the prompt optimization~\cite{Zhang2023ToGO}. Unfortunately, external detectors are not typically trained on generated data, particularly for fine-grained concepts, making them vulnerable to concept drift and biased predictions. %
Our experimental results highlight this limitation: while external detectors perform effectively on real-world images, their reliability degrades significantly when applied to generated content, leading to high false positives.

To the best of our knowledge, there is a critical void of practical and effective tools for large-scale evaluation of DMs.

\subsection{Data memorization analysis}
Data memorization analysis investigates the extent to which diffusion models memorize their training data, primarily through data extraction attacks and membership inference attacks. For example, Data extraction attacks aim to recover exact data samples from the model’s training set~\cite{carlini2023extracting,somepalli2023diffusion,somepalli2023diffusion} while membership inference attacks determine whether a specific data sample was part of the model's training set~\cite{duan2023diffusion,matsumoto2023membership,ma2024could}.
This line of analysis identifies an exact match of the data samples in the DM's training data, which, if successful, provides strong evidence for concept auditing. 
However, applying these data memorization approaches in our work is limited, since deriving precise conclusions for individual data samples often requires extensive querying of the DMs and access to a substantial amount of training data for statistical analysis. These demands make such methods impractical and inefficient for large-scale concept auditing.

\section{Limitations and Discussions}
\subsection{Scope of Concepts}
\label{sec:discussion_concept}
Defining what constitutes a ``concept'' in diffusion models remains an open and underexplored problem. Depending on context, a concept may refer to a broad category, such as ``Disney characters,'' or a highly specific instance, like ``Mickey Mouse.'' This ambiguity is well-documented in related areas such as concept customization~\cite{kumari2023multi,zeng2024improved,smith2023continual} and concept erasure~\cite{Gandikota2023Erasing,lu2024mace,liu2024implicit}, where no consistent standard exists for what qualifies as a distinct or meaningful concept. Such ambiguity poses a challenge for concept auditing, as it complicates the definition of universal auditing criteria.

Rather than imposing a rigid definition, we adopt an \textit{example-based approach}: a concept is considered present if the model can generate recognizable outputs aligned with a small set of reference examples. This flexible design enables our framework to support a wide range of concept granularities and types. In our evaluation, we audit models on concepts such as individual celebrities, complete character sets from specific cartoons (\eg, all Pokémon characters), and object categories from video games. We leave the development of formal taxonomies and clearer conceptual boundaries as an important direction for future work, particularly for improving auditing resolution across different semantic scopes.

While our focus is on semantically meaningful and legally actionable concepts, we note that concept-level auditing for unsafe content (\eg, NSFW or violent imagery) introduces additional ambiguity. Safety-related analysis of diffusion models has gained increasing attention~\cite{schramowski2023safe,zhang2024generate,yang2024sneakyprompt,Li2024SafeGenMU,han2024shielddiff,chen2025safe,li2025detect,chen2025comprehensive}, yet defining and auditing ``unsafe content'' remains difficult due to its context-dependence, vague definitions, and heterogeneous semantics. These factors make unsafe content less applicable to our example-based formulation. While \proposed is not designed to audit unsafe content directly, its methodology could potentially be extended with domain-specific knowledge or safety taxonomies in future work.

\subsection{Auditing in Multi-LoRA Scenarios}
\label{sec:discussion_multilora}
This work mainly focuses on auditing fine-tuned models with a single LoRA module. We investigated the scenarios where multiple concepts are embedded in a single LoRA. Real-world DM deployments also involve {multiple LoRAs}, which are simultaneously loaded into a DM model. This practice is increasingly common on community platforms, where users compose multiple LoRAs, each capturing different aspects of model behavior such as style, texture, or object identity. The multi-LoRA scenario raises questions about how concept representations may be distributed across different LoRAs, and how interactions between LoRAs might influence a model's generative behavior. We view multi-LoRA auditing as a promising direction for extending the \proposed framework.

\subsection{Toward Adversarial Auditing}
\label{sec:discussion_adversary}

This work focuses on \textit{non-adversarial auditing}, where fine-tuned DMs are not deliberately optimized to evade auditing. This setting reflects the majority of real-world cases on public platforms, where users unknowingly or negligently publish potentially problematic DMs, often without consistent metadata or clear prompt disclosure. To assess robustness under adversarial conditions, we include 
five adaptive attacks where the attacker has full knowledge of the auditing framework. \proposed remains effective in both settings, highlighting the robustness of its model-centric design.

Nonetheless, backdoor attacks represent an emerging threat to DMs~\cite{chen2023trojdiff,zhai2023text,chou2023backdoor,chou2024villandiffusion}, where adversaries inject covert patterns into training data or prompts to trigger specific outputs. Existing backdoor detectors primarily focus on defending against image-based triggers~\cite{an2024elijah,mo2024terd,sui2024disdet}. Effective and robust prompt-based trigger detection remains an open challenge. 
Extending \proposed to adversarially evasive fine-tuning introduces a fundamentally different threat model, where the goal is to actively conceal learned concepts. Addressing this challenge would likely require new assumptions, threat models, and detection mechanisms. Moreover, adversarial auditing may conflict with \proposed's scalability goals, as stronger defenses often require expensive or targeted model interrogation. We leave this important but orthogonal direction to future work, and view our current focus as a necessary first step toward practical, large-scale concept auditing.

\section{Conclusion}
We introduce {Prompt-Agnostic Image-Free Auditing (\proposed)}, a model-centric framework for auditing fine-tuned diffusion models without relying on prompt optimization or image-based detection. \proposed combines two key innovations: a \textit{prompt-agnostic design} that analyzes internal model behavior during prompt-insensitive stages of generation, and an \textit{image-free mechanism} based on conditional calibrated error, which compares denoising behavior against the base model to reveal concept learning.

Extensive experiments demonstrate that \proposed consistently outperforms state-of-the-art baselines in both accuracy and efficiency. Our auditing setting assumes internal access to DMs, but does not require access to proprietary fine-tuning data or user prompts, an assumption aligned with how hosted platforms audit uploaded models. On benchmark models, \proposed achieves accurate and efficient concept auditing with significantly reduced computation. On real-world models collected from Civitai, it reaches an average accuracy of {over $92$\%} across categories including celebrities, cartoons, video games, and movies. In addition, \proposed is robust against three adaptive attacks where the attackers know the auditing design, highlighting its robustness and real-world applicability. 

Our results establish \proposed as the first practical and scalable solution for pre-deployment concept auditing in diffusion models. We hope this work contributes to safer, more transparent model sharing and lays the groundwork for future efforts in responsible generative model management.

\section{Acknowledgments}
The work of X. Yuan and L. Zhang was supported in part by the National Science Foundation under Grant 2427316, 2426318. 
The work of L. Guo was sponsored by the Army Research Office and was accomplished under Grant Number W911NF-24-1-0044. The views and conclusions contained in this document are those of the authors and should not be interpreted as representing the official policies, either expressed or implied, of the Army Research Office or the U.S. Government. The U.S. Government is authorized to reproduce and distribute reprints for Government purposes notwithstanding any copyright notation herein.

\bibliographystyle{ACM-Reference-Format}
\balance

\clearpage

\onecolumn
\appendix
\section{Proof of Lemma 1}
\label{sec:proof}
We present the full derivation of the gradient of the cross-attention function with respect to the text embedding $\vp$.

\begin{lemma}
The gradient of the cross-attention function with respect to the text embedding $\vp$ is given by:
\begin{equation}
\frac{\partial Y_i}{\partial \vp} =  (diag(S_i) - S_{i}S_i^T)(\frac{1}{\sqrt{d}}(XW_Q)W_K^T)\vp W_V + S_{i}W_V^T,\end{equation}
where $S$ denotes the cross-attention map (Eq.~\ref{eq:attention_map}), $S_i$ the $i$-th row of $S$, $X$ the image features calculated by the previous layers, and $Y_i$ the $i$-th output of the cross-attention layer (Eq.~\ref{eq:attention_output}). 
\end{lemma}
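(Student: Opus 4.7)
The plan is to proceed by a direct application of the product rule to $Y_i = S_i V$, treating $S_i$ and $V$ as the two factors that each depend on the text embedding $\vp$ through the definitions $Q = X W_Q$, $K = \vp W_K$, and $V = \vp W_V$. Writing $Y_i = S_i \, \vp W_V$ and differentiating gives two contributions: one from the dependence of $V$ on $\vp$, and one from the dependence of the attention weights $S_i$ on $\vp$ through the softmax.

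First I would dispatch the easy term. Since $V = \vp W_V$ is linear in $\vp$, the factor $S_i \cdot \partial V/\partial \vp$ collapses to $S_i W_V^T$ after accounting for the standard matrix-calculus convention used in the statement. This produces the second summand of the claimed formula.

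Next I would tackle the harder term $(\partial S_i / \partial \vp)\, V$ by the chain rule through the softmax. The key intermediate object is the row-wise pre-softmax logit vector $L_i = \tfrac{1}{\sqrt{d}} (X W_Q)_i (\vp W_K)^T$, which is linear in $\vp$ with Jacobian $\tfrac{1}{\sqrt{d}}(X W_Q)_i W_K^T$ (broadcast appropriately across rows via the $(XW_Q)W_K^T$ factor in the lemma statement). Composing this with the well-known softmax Jacobian $\partial S_i / \partial L_i = \operatorname{diag}(S_i) - S_i S_i^T$ and then right-multiplying by $V = \vp W_V$ yields the first summand $(\operatorname{diag}(S_i) - S_i S_i^T)\bigl(\tfrac{1}{\sqrt{d}}(X W_Q) W_K^T\bigr)\vp W_V$. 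Adding the two contributions reproduces the stated expression.

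The main obstacle I anticipate is purely bookkeeping: keeping the transposes, row-versus-column conventions, and broadcasting consistent across the softmax Jacobian, the linear map $\vp \mapsto L_i$, and the final multiplication by $V$. In particular, one must be careful that the $\operatorname{diag}(S_i) - S_i S_i^T$ factor acts on the correct axis of $(XW_Q)W_K^T$ so that the compact form in the lemma matches the elementwise derivation. Once these conventions are fixed at the outset (matching those implied by the $S_i W_V^T$ term of the statement), the calculation is a straightforward chain-rule exercise with no deeper analytical content.
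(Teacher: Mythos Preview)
Your proposal is correct and follows essentially the same route as the paper: apply the product rule to $Y_i = S_i(\vp W_V)$, handle the linear $V$-term to obtain $S_i W_V^T$, and for the $S_i$-term chain the standard softmax Jacobian $\operatorname{diag}(S_i)-S_iS_i^T$ with the linear map $\vp \mapsto \tfrac{1}{\sqrt{d}}(XW_Q)(\vp W_K)^T$. Your caveat about transpose and broadcasting bookkeeping is apt, as the paper's derivation also glosses over these conventions.
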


\begin{proof} 
The cross-attention function can expressed as 
\begin{align*}
Y &=\text{softmax}(\frac{QK^T}{\sqrt{d}})V\\
&= \text{softmax}(\frac{(\vz W_Q) (\vp W_K)^T}{\sqrt{d}}) (\vp W_V)
\end{align*}
Let $A = (X W_Q) (\vp W_K)^T / \sqrt{d}$ and $S = \text{softmax}(A)$, {where}  $S_{ij} = \frac{e^{A_{ij}}}{\sum_k e^{A_{ik}}}$. 
Therefore, we have $Y = S \vp W_V$.
To compute the gradient of $Y$ with respect to $\vp$, we use the chain rule:
\begin{equation}
\label{eq:gradient_split}
\frac{\partial Y}{\partial \vp} = \frac{\partial S}{\partial \vp} (\vp W_V) + S\frac{\partial \vp W_V}{\partial \vp}
\end{equation}
To calculate the first term, we have the Jacobian matrix of Softmax, which is given by:
\begin{equation}
\frac{\partial S_{ij}}{\partial A_{lk}} = 
\begin{cases}
0 & i \neq l \\
S_{ij} (\delta_{jk} - S_{jk}) & i = l
\end{cases}
\end{equation}
where $\delta_{jk}$ is the Kronecker delta, $\delta_{jk} = 1$ if $j=k$ and $0$ otherwise. 
For simplicity, we denote the Jacobian of $S_i$ with respect to $A$ as:
\begin{equation}
J_i :=  \frac{\partial S_i}{\partial A} = diag(S_i) - S_iS_i^T.
\end{equation}

Thus, the gradient in the first term of Eq.~\ref{eq:gradient_split} becomes: 
\begin{equation}
\label{eq:softmax_gradient}
    \frac{\partial S_i}{\partial \vp} = \frac{\partial S_i}{\partial A} \frac{\partial A}{\partial \vp}  = J_i(\frac{1}{\sqrt{d}}(XW_Q)W_K^T).
\end{equation}

By incorporating Eq.~\ref{eq:softmax_gradient} into Eq.~\ref{eq:gradient_split}, we have:
\begin{equation*}
\frac{\partial Y_i}{\partial \vp} =  J_i(\frac{1}{\sqrt{d}}(XW_Q)W_K^T) \vp W_V + S_iW_V^T
\end{equation*}
\end{proof}

\section{\proposed Pseudocode}
We present a complete pseudocode of \proposed in Algorithm~\ref{alg:paia}.

\begin{center}
\begin{minipage}{0.8\linewidth} 
\begin{algorithm}[H]
\caption{PAIA: Prompt-Agnostic Image-Free Auditing (Training and Inference)}
\label{alg:paia}
\KwIn{
Target model $f_{W'}$,  Base model $f_W$, Image set with the target concept $\mathcal{D}_{\text{target}}$, Image set with irrelevant concepts $\mathcal{D}_{\text{irr}}$, Cutoff time step $\tau$, Time steps $\mathcal{T}$
}
\KwOut{Prediction $y \in \{0,1\}$ indicating whether $f_{W'}$ can generate the target concept}

\vspace{0.5em}
Freeze the cross-attention in $f_{W'}$ to obtain parameters $W''$

\vspace{0.5em}
\SetKwFunction{CCE}{ComputeFeatures}
\SetKwProg{Fn}{Function}{:}{}
\Fn{\CCE{$f_{W'}$, $\mathcal{D}$}}{
    Initialize feature set $\mathcal{F} \leftarrow \emptyset$ \\
    \ForEach{$\bm{x} \in \mathcal{D}$}{
        \ForEach{$t \in \mathcal{T}$}{
            Generate noised latent image representation $\vz_t$ and pseudo-prompt $\vp$ from $\bm{x}$ \\
            \uIf{$t < \tau$}{
                $\mathcal{L}_{cce}^{t} \leftarrow 
                    \mathbb{E} \left[\| {\vepsilon}_{W'}(\vz_t, \vp) - \vepsilon_0 \|^2 \right] 
                    - \| {\vepsilon}_{W}(\vz_t, \vp) - \vepsilon_0 \|^2$
            }
            \Else{
                $\mathcal{L}_{cce}^{t} \leftarrow 
                    \mathbb{E} \left[\| {\vepsilon}_{W''}(\vz_t, \vp) - \vepsilon_0 \|^2 \right] 
                    - \| {\vepsilon}_{W}(\vz_t, \vp) - \vepsilon_0 \|^2$
            }
        }
        $\phi(\bm{x}) \leftarrow \{ \mathcal{L}_{cce}^{t} \}_{t \in \mathcal{T}}$ \\
        $\mathcal{F} \leftarrow \mathcal{F} \cup \{\phi(\bm{x})\}$
    }
    \Return{$\mathcal{F}$}
}

\vspace{0.5em}
\textbf{Training Phase:} \\
$\mathcal{F}_{\text{irr}} \leftarrow$ \CCE{$f_{W'}$, $\mathcal{D}_{\text{irr}}$} \\
Train an anomaly detector $\mathcal{A}$ on $\mathcal{F}_{\text{irr}}$

\vspace{0.5em}
\textbf{Inference Phase:} \\
$\mathcal{F}_{\text{target}} \leftarrow$ \CCE{$f_{W'}$, $\mathcal{D}_{\text{target}}$} \\
Initialize anomaly indicator list $Y \leftarrow \emptyset$ \\
\ForEach{$\phi(\bm{x}_i) \in \mathcal{F}_{\text{target}}$}{
    Compute anomaly score $s_i \leftarrow \mathcal{A}(\phi(\bm{x}_i))$ \\
    $y_i \leftarrow \mathbb{I}[s_i > \delta]$ // 1 if the anomaly score is great than threshold $\delta$\\
    $Y \leftarrow Y \cup \{y_i\}$
}
\Return{$y = \text{MajorityVote}(Y)$}
\end{algorithm}
\end{minipage}
\end{center}

\section{Terms used in random prompt generation.}
\label{sec:prompt_example}
The following set of descriptive keywords was used to augment prompts in the random-prompt evaluation setting. These terms were randomly inserted to simulate realistic prompt noise and diversity.

\begin{table}[!ht]
\caption{Words used in pseudo prompt generation strategy.}
\label{tab:random_term}
\begin{tabular}{|cccc|}
\hline
natural lighting & portrait       & photorealistic    & best quality \\
realistic        & ultra detailed & standing          & highres      \\
detailed face    & solo           & masterpiece       & outdoors     \\
film grain       & illustration   & soft light        & raw photo    \\
street           & from side      & looking at viewer & sitting      \\ \hline
\end{tabular}

\end{table}

\section{Benchmarking Dataset Samples: Celebrities and Cartoons}
\label{sec:benchmark}
We provide example images from the curated datasets used in our benchmark evaluation in Figure~\ref{fig:celebrity_examples} and~\ref{fig:cartoon_examples}. Each concept is represented by a small set of reference images used during fine-tuning and for auditing evaluation.

\begin{figure}[!h]
\centering
\includegraphics[width=0.9\linewidth]{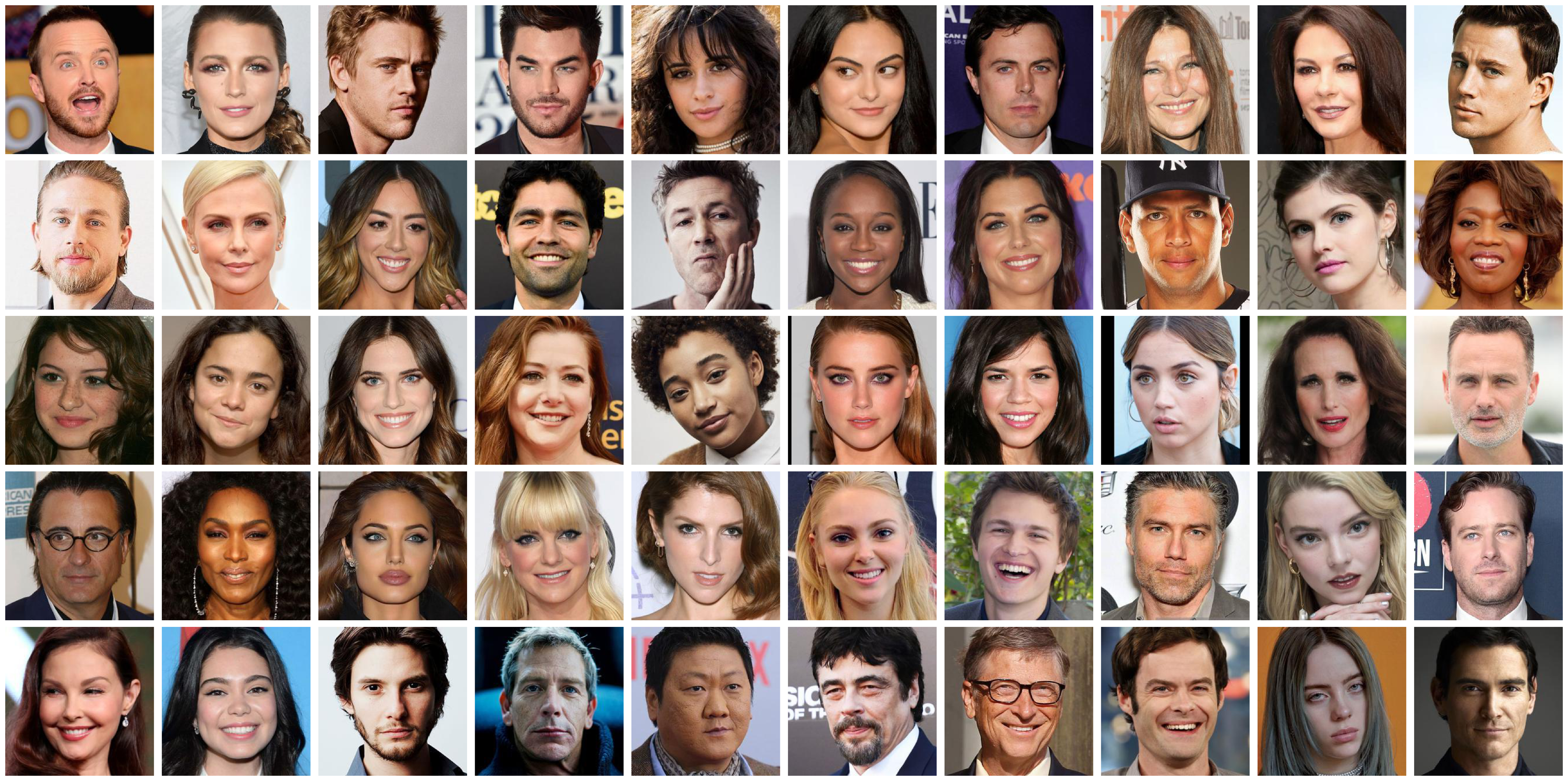}
\caption{Example concept images from the Celebrity dataset used in benchmarking.}
\label{fig:celebrity_examples}
\end{figure}

\begin{figure}[!h]
\centering
\includegraphics[width=0.9\linewidth]{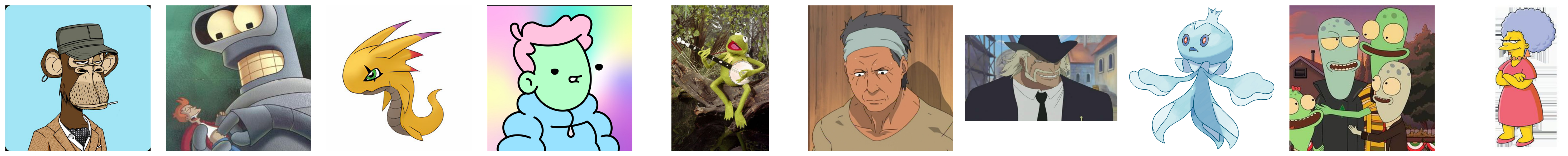}
\caption{Example concept images from the Cartoon character dataset used in benchmarking.}
\label{fig:cartoon_examples}
\end{figure}

\section{Civitai model analysis.}
\label{sec:app_civitai}
To support real-world evaluation, we analyze the diffusion models hosted on Civitai, one of the largest community platforms for sharing LoRA-based fine-tuned models.

Figure~\ref{fig:civitai_tag} shows the distribution of commonly used tags on Civitai. Among these, the most frequent concept categories include ``celebrity,'' ``game,'' and ``cartoon.'' Based on this trend, our evaluation focuses on LoRA models falling within these categories. Additionally, we include a ``movie'' category to capture models, which overlaps with tags such as ``character'' and ``actress''.

Figure~\ref{fig:civitai_base_model} shows that Stable Diffusion 1.5 (SD1.5) is the most commonly used base model for LoRA fine-tuning. Therefore, our real-world evaluation focuses primarily on LoRA models adapted from SD1.5.

\begin{figure}[!h]
\centering
\includegraphics[width=0.5\linewidth]{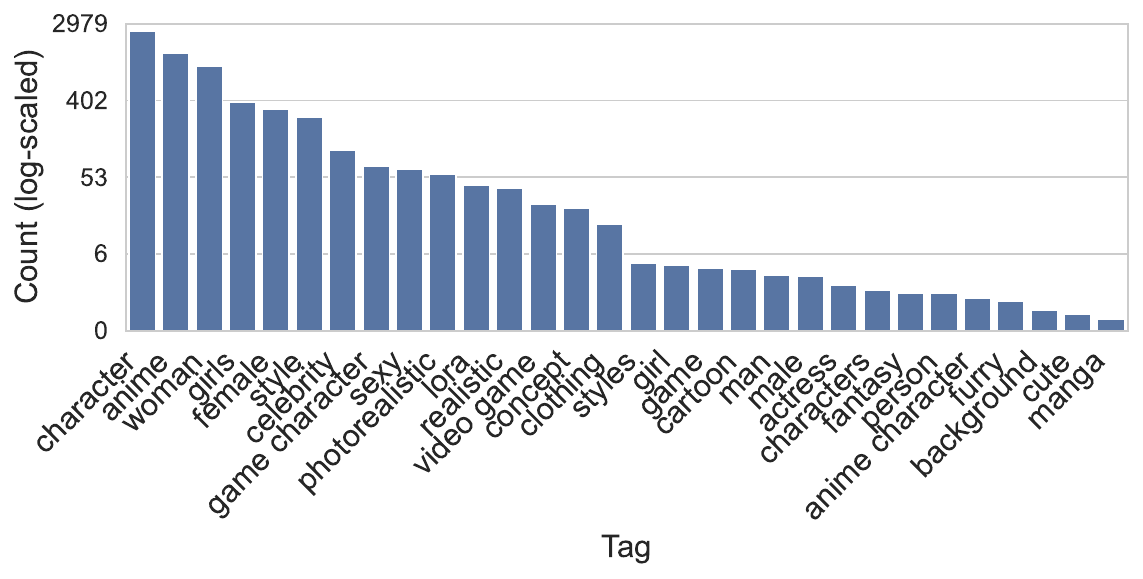}
\caption{Common tags in models uploaded to Civitai.}
\label{fig:civitai_tag}
\end{figure}

\begin{figure}[!h]
\centering
\includegraphics[width=0.3\linewidth]{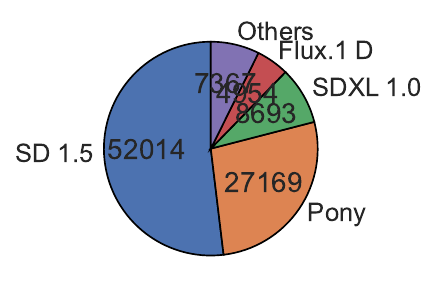}
\caption{Distribution of base models used for LoRA fine-tuning on Civitai.}
\label{fig:civitai_base_model}
\end{figure}

\section{Additional Evaluation Results}
\label{sec:eval_app}
\subsection{Impact of Cutoff Time Step.}
\begin{figure}[!h]
\centering
\includegraphics[width=0.3\linewidth]{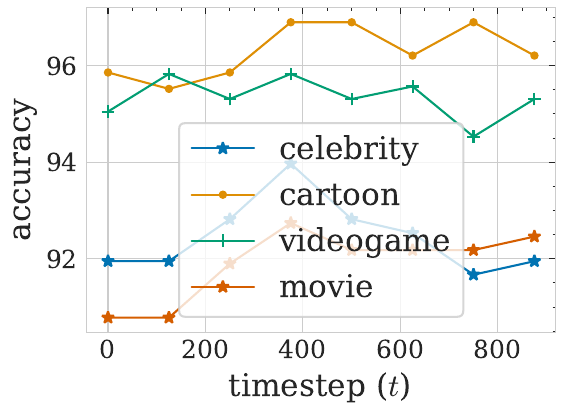}
\caption{Impact of cutoff time step.}
\label{fig:cutoff}
\end{figure}

We evaluate the impact of varying the cutoff timestep $\gamma$ in CCE (Eq.~\ref{eq:final_loss}) using Civitai models, with $\gamma \in \{\frac{T}{8}, \frac{T}{4}, \frac{3T}{8}, \frac{T}{2}, \frac{5T}{8}, \frac{3T}{4}, \frac{7T}{8}, T\}$. As shown in Figure~\ref{fig:cutoff}, the best performance is achieved around $\gamma = \frac{T}{2}$, balancing prompt suppression and behavioral signal strength. Early cutoffs underutilize prompt information, while late cutoffs weaken calibration. Performance across mid-range values (\eg, between $\frac{3T}{8}$ and $\frac{5T}{8}$) is relatively stable , indicating that \proposed is not sensitive to the exact choice of $\gamma$.

\subsection{Effectiveness of Conditional Calibration on Civitai Models}

\begin{table}[!h]
\caption{Impact of Conditional Calibration on Civitai Models.}
\label{tab:selective_civitai}
\small
\begin{tabular}{@{}lrr@{}}
\toprule
Category & With Conditional Calibration & Without Conditional Calibration \\ \midrule
Celebrity & 93.97\% & 91.38\% \\
Cartoon   & 96.90\% & 95.86\% \\
Videogame & 95.83\% & 95.05\% \\
Movie     & 92.74\% & 91.06\% \\ \bottomrule
\end{tabular}%
\end{table}

We investigate the effectiveness of conditional calibration on Civitai Models. 
As shown in Table~\ref{tab:selective_civitai}, conditional calibration improves auditing accuracy across all categories. Although the gains are relatively small (1–2\%), they are consistent and meaningful, especially given the strong baseline performance without calibration.

\section{Trigger Quality Analysis on Civitai Models}
\label{sec:app_trigger_analysis}

To better understand the challenges in prompt-based auditing, we analyze the trigger words associated with LoRA models uploaded to Civitai. While many models provide trigger prompts, the quality and alignment of these prompts vary significantly.

\begin{figure}[!h]
\centering
\begin{subfigure}[b]{0.45\linewidth}
\centering
\includegraphics[width=0.7\linewidth]{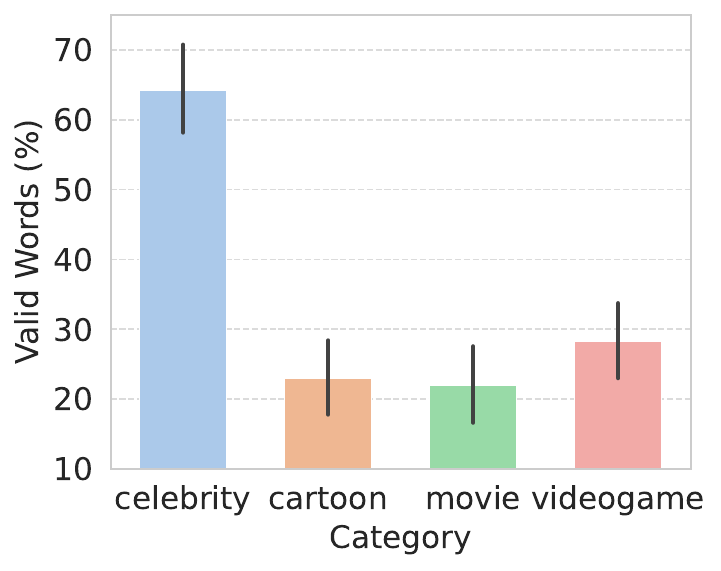}
\caption{Percentage of valid English words used in trigger words.}
\label{fig:civitai_word}
\end{subfigure}
\hfill
\begin{subfigure}[b]{0.45\linewidth}
\centering
\includegraphics[width=0.7\linewidth]{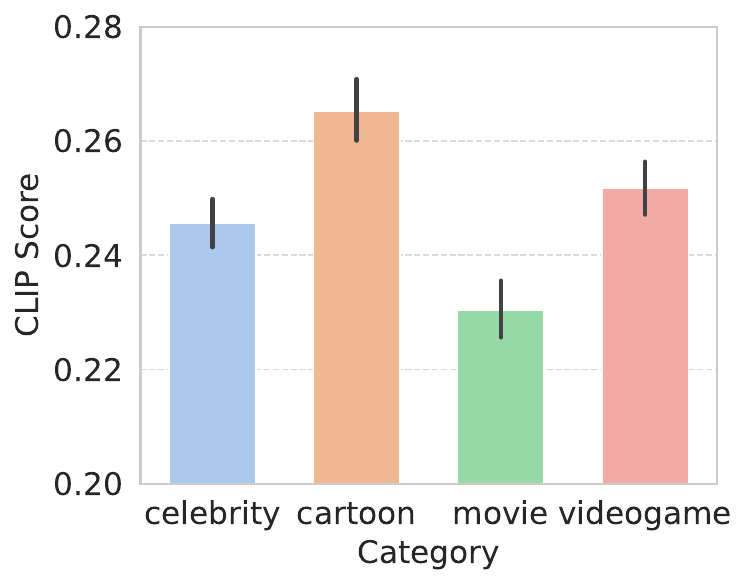}
\caption{CLIP similarity between generated images and their associated trigger words.}
\label{fig:civitai_clip_score}
\end{subfigure}
\caption{Analysis of trigger words that are associated with uploaded DMs over four categories on Civitai.}
\end{figure}

Figure~\ref{fig:civitai_word} shows that a large portion of trigger words used by creators are not valid English words. These nonstandard or synthetically generated tokens are common in community models, making them difficult to probe or interpret.

Figure~\ref{fig:civitai_clip_score} further highlights the issue by quantifying the semantic alignment between trigger words and their corresponding images. We use CLIP similarity scores to measure how well a trigger word matches the content of its sample image. The observed scores reveal that many triggers exhibit weak semantic correspondence with the generated outputs, indicating that even when a trigger is known, it may not meaningfully reflect the learned concept.

Together, these findings underscore a key obstacle in prompt-based auditing—\textbf{trigger uncertainty}, which arises from the large, discrete nature of the prompt space and the inherent ambiguity of natural language. Optimization-based probing methods (\eg, adversarial or reinforcement learning) often fail to reliably discover effective prompts, especially in the presence of low-quality or misleading triggers. These issues further motivate our model-centric, prompt-agnostic auditing framework.

\end{document}